\newtheorem{thm}{Theorem}
\newtheorem{lemma}{Lemma}
\def \x {\mathbf{x}}
\def \g {\mathbf{g}}
\def \D {\mathcal{D}}
\def \u {\mathbf{u}}
\def \w {\mathbf{w}}
\def \L {\mathcal{L}}
\def \c {\mathbf{c}}
\def \S {\mathcal{S}}
\begin{document}

\title{Adaptive Subgradient Methods for Online AUC Maximization}

\author{Yi Ding, Peilin Zhao, Steven C.H. Hoi, Yew-Soon Ong
	\IEEEcompsocitemizethanks{\IEEEcompsocthanksitem Yi Ding is with the Department
		of Computer Science, The University of Chicago, Chicago, IL, USA, 60637, E-mail: dingy@uchicago.edu. \protect\\
		\IEEEcompsocthanksitem Corresponding author: Steven C.H. Hoi is with the
		School of Information Systems, Singapore Management University, Singapore
		178902, E-mail: chhoi@smu.edu.sg. \protect\\
		\IEEEcompsocthanksitem Peilin Zhao is with the Data Analytics Department, Institute for Infocomm Research, A*STAR, Singapore 138632, E-mail: zhaop@i2r.a-star.edu.sg.\protect\\
		\IEEEcompsocthanksitem Yew-Soon Ong is with the School of Computer Engineering, Nanyang Technological University, Singapore 639798, E-mail: ASYSOng@ntu.edu.sg.}}


\IEEEtitleabstractindextext{
\begin{abstract}
Learning for maximizing AUC performance is an important research problem in Machine Learning and Artificial Intelligence. Unlike traditional batch learning methods for maximizing AUC which often suffer from poor scalability, recent years have witnessed some emerging studies that attempt to maximize AUC by single-pass online learning approaches. Despite their encouraging results reported, the existing online AUC maximization algorithms often adopt simple online gradient descent approaches that fail to exploit the geometrical knowledge of the data observed during the online learning process, and thus could suffer from relatively larger regret. To address the above limitation, in this work, we explore a novel algorithm of Adaptive Online AUC Maximization (AdaOAM) which employs an adaptive gradient method that exploits the knowledge of historical gradients to perform more informative online learning. The new adaptive updating strategy of the AdaOAM is less sensitive to the parameter settings and maintains the same time complexity as previous non-adaptive counterparts. Additionally, we extend the algorithm to handle high-dimensional sparse data (SAdaOAM) and address sparsity in the solution by performing lazy gradient updating. We analyze the theoretical bounds and evaluate their empirical performance on various types of data sets. The encouraging empirical results obtained clearly highlighted the effectiveness and efficiency of the proposed algorithms.
\end{abstract}
\begin{IEEEkeywords}
AUC maximization, second-order online learning, adaptive gradient, high-dimensional, sparsity.
\end{IEEEkeywords}}

\maketitle



\IEEEraisesectionheading{\section{Introduction}\label{sec:introduction}}

\IEEEPARstart{A}{UC} (Area Under ROC curve)~\cite{Fawcett:2006:IRA:1159473.1159475} is an important measure for characterizing machine learning performances in many real-world applications, such as ranking, and anomaly detection tasks, especially when misclassification costs are unknown. In general, AUC measures the probability for a randomly drawn positive instance to have a higher decision value than a randomly sample negative instance. Many efforts have been devoted recently to developing efficient AUC optimization algorithms for both batch and online learning tasks~\cite{DBLP:conf/nips/CortesM03,DBLP:conf/pkdd/CaldersJ07,DBLP:conf/icml/Joachims05,DBLP:journals/jmlr/RudinS09,DBLP:conf/icml/ZhaoHJY11,DBLP:conf/icml/GaoJZZ13}.

Due to its high efficiency and scalability in real-world applications, online AUC optimization for streaming data has been actively studied in the research community in recent years. The key challenge for AUC optimization in online setting is that AUC is a metric represented by the sum of pairwise losses between instances from different classes, which makes conventional online learning algorithms unsuitable for direct use in many real world scenarios. To address this challenge, two core types of Online AUC Maximization (OAM) frameworks have been proposed recently. The first framework is based on the idea of buffer sampling~\cite{DBLP:conf/icml/ZhaoHJY11,DBLP:conf/icml/KarS0K13}, which stores some randomly sampled historical examples in a buffer to represent the observed data for calculating the pairwise loss functions. The other framework focuses on one-pass AUC optimization~\cite{DBLP:conf/icml/GaoJZZ13}, where the algorithm scan through the training data only once. The benefit of one-pass AUC optimization lies in the use of squared loss to represent the AUC loss function while providing proofs on its consistency with the AUC measure~\cite{gao2012consistency}.

Although these algorithms have been shown to be capable of achieving fairly good AUC performances, they share a common trait of employing the online gradient descent technique, which fail to take advantage of the geometrical property of the data observed from the online learning process, while recent studies have shown the importance of exploiting this information for online optimization~\cite{DBLP:journals/jmlr/DuchiHS11}. To overcome the limitation of the existing works, we propose a novel framework of Adaptive Online AUC maximization (AdaOAM), which considers the adaptive gradient optimization technique for exploiting the geometric property of the observed data to accelerate online AUC maximization tasks. Specifically, the technique is motivated by a simple intuition, that is, the frequently occurring features in online learning process should be assigned with low learning rates while the rarely occurring features should be given high learning rates. To achieve this purpose, we propose the AdaOAM algorithm by adopting the adaptive gradient updating framework proposed by~\cite{DBLP:journals/jmlr/DuchiHS11} to control the learning rates for different features. We theoretically prove that the regret bound of the proposed algorithm is better than those of the existing non-adaptive algorithms. We also empirically compared the proposed algorithm with several state-of-the-art online AUC optimization algorithms on both benchmark datasets and real-world online anomaly detection datasets. The promising results validate the effectiveness and efficiency of the proposed AdaOAM.

To further handle high-dimensional sparse tasks in practice, we investigate an extension of the AdaOAM method, which is labeled here as the Sparse AdaOAM method (SAdaOAM). The motivation is that because the regular AdaOAM algorithm assumes every feature is relevant and thus most of the weights for corresponding features are often non-zero, which leads to redundancy and low efficiency when rare features are informative for high dimension tasks in practice. To make AdaOAM more suitable for such cases, the SAdaOAM algorithm is proposed by inducing sparsity in the learning weights using adaptive proximal online gradient descent. To the best of our knowledge, this is the first effort to address the problem of keeping the online model sparse in online AUC maximization task. Moreover, we have theoretically analyzed this algorithm, and empirically evaluated it on an extensive set of real-world public datasets, compared with several state-of-the-art online AUC maximization algorithms. Promising results have been obtained that validate the effectiveness and efficacy of the proposed SAdaOAM.

The rest of this paper is organized as follows. We first review the related works from three core areas: online learning, AUC maximization, and sparse online learning, respectively. Then, we present the formulations of the proposed approaches for handling both regular and high-dimensional sparse data, and their theoretical analysis; we further show and discuss the comprehensive experimental results, the sensitivity of the parameters, and tradeoffs between the level of sparsity and AUC performances. Finally, we conclude the paper with a brief summary of the present work.

\section{Related Work}

Our work is closely related to three topics in the context of machine learning, namely, online learning, AUC maximization, and sparse online learning. Below we briefly review some of the important related work in these areas.

\textbf{Online Learning.} Online learning has been extensively studied in the machine learning communities~\cite{DBLP:books/daglib/0016248,DBLP:journals/jmlr/CrammerDKSS06,DBLP:journals/jmlr/ZhaoHJ11,DBLP:journals/ml/HoiJZY13,DBLP:journals/ai/ZhaoHWL14}, mainly due to its high efficiency and scalability to large-scale learning tasks. Different from conventional batch learning methods that assume all training instances are available prior to the learning phase, online learning considers one instance each time to update the model sequentially and iteratively. Therefore, online learning is ideally appropriate for tasks in which data arrives sequentially. A number of first-order algorithms have been proposed including the well-known Perceptron algorithm~\cite{rosenblatt1958perceptron} and the Passive-Aggressive (PA) algorithm~\cite{DBLP:journals/jmlr/CrammerDKSS06}. Although the PA introduces the concept of ``maximum margin" for classification, it fails to control the direction and scale of parameter updates during online learning phase. In order to address this issue, recent years have witnessed some second-order online learning algorithms~\cite{DBLP:conf/icml/DredzeCP08,DBLP:conf/nips/CrammerKD09,DBLP:conf/nips/OrabonaC10,DBLP:conf/icml/HoiWZ12}, which apply parameter confidence information to improve online learning performance. Further, in order to solve the cost-sensitive classification tasks on-the-fly, online learning researchers have also proposed a few novel online learning algorithms to directly optimize some more meaningful cost-sensitive metrics~\cite{DBLP:journals/tkde/WangZH14,DBLP:conf/kdd/ZhaoH13,DBLP:conf/sdm/HoiZ13}.

\textbf{AUC Maximization.} AUC (Area Under ROC curve) is an important performance measure that has been widely used in imbalanced data distribution classification. The ROC curve explains the rate of the true positive against the false positive at various range of threshold. Thus, AUC represents the probability that a classifier will rank a randomly chosen positive instance higher than a randomly chosen negative one. Recently, many algorithms have been developed to optimize AUC directly~\cite{DBLP:conf/nips/CortesM03,DBLP:conf/pkdd/CaldersJ07,DBLP:conf/icml/Joachims05,DBLP:conf/icml/ZhaoHJY11,DBLP:conf/icml/GaoJZZ13}.
In~\cite{DBLP:conf/icml/Joachims05}, the author firstly presented a general framework for optimizing multivariate nonlinear performance measures such as the AUC, F1, etc. in a batch mode. Online learning algorithms for AUC maximization involving large-scale applications have also been studied. Among the online AUC maximization approaches, two core online AUC optimization frameworks have been proposed very recently. The first framework is based on the idea of buffer sampling~\cite{DBLP:conf/icml/ZhaoHJY11,DBLP:conf/icml/KarS0K13}, which employed a fixed-size buffer to represent the observed data for calculating the pairwise loss functions. A representative study is available in~\cite{DBLP:conf/icml/ZhaoHJY11}, which leveraged the reservoir sampling technique to represent the observed data instances by a fixed-size buffer where notable theoretical and empirical results have been reported. Then,~\cite{DBLP:conf/icml/KarS0K13} studied the improved generalization capability of online learning algorithms for pairwise loss functions with the framework of buffer sampling. The main contribution of their work is the introduction of the stream subsampling with replacement as the buffer update strategy. The other framework which takes a different perspective was presented by~\cite{DBLP:conf/icml/GaoJZZ13}. They extended the previous online AUC maximization framework with a regression-based one-pass learning mode, and achieved solid regret bounds by considering square loss for the AUC optimization task due to its theoretical consistency with AUC.

\textbf{Sparse Online Learning.} The high dimensionality and high sparsity are two important issues for large-scale machine learning tasks. Many previous efforts have been devoted to tackling these issues in the batch setting, but they usually suffer from poor scalability when dealing with big data. Recent years have witnessed extensive research studies on sparse online learning~\cite{DBLP:journals/jmlr/LangfordLZ09,DBLP:journals/jmlr/DuchiS09,DBLP:journals/jmlr/Xiao10,DBLP:journals/jmlr/Shalev-ShwartzT11}, which aim to learn sparse classifiers by limiting the number of active features. There are two core categories of methods for sparse online learning. The representative work of the first type follows the general framework of subgradient descent with truncation. Taking the FOBOS algorithm~\cite{DBLP:journals/jmlr/DuchiS09} as an example, which is based on the Forward-Backward Splitting method to solve the sparse online learning problem by alternating between two phases: (i) an unconstraint stochastic subgradient descent step with respect to the loss function, and (ii) an instantaneous optimization for a tradeoff between keeping close proximity to the result of the first step and minimizing $\ell_1$ regularization term. Following this strategy, ~\cite{DBLP:journals/jmlr/LangfordLZ09} argues that the truncation at each step is too aggressive and thus proposes the Truncated Gradient (TG) method, which alleviates the updates by truncating the coefficients at every K steps when they are lower than a predefined threshold. The second category of methods are mainly motivated by the dual averaging method~\cite{DBLP:journals/mp/Nesterov09}. The most popular method in this category is the Regularized Dual Averaging (RDA)~\cite{DBLP:journals/jmlr/Xiao10}, which solves the optimization problem by using the running average of all past subgradients of the loss functions and the whole regularization term instead of the subgradient. In this manner, the RDA method has been shown to exploit the regularization structure more easily in the online phase and obtain the desired regularization effects more efficiently.

Despite the extensive works in these different fields of machine learning, to the best of our knowledge, our current work represents the first effort to explore adaptive gradient optimization and second order learning techniques for online AUC maximization in both regular and sparse online learning settings.

\section{Adaptive Subgradient Methods for OAM}

\subsection{Problem Setting}

We aim to learn a linear classification model that maximizes AUC for a binary classification problem. Without loss of generality, we assume positive class to be less than negative class. Denote $(\x_t, y_t)$ as the training instance received at the $t$-th trial, where $\x_t \in \mathbb{R}^d$ and $y_t \in \{-1, +1\}$, and $\mathbf{w}_t\in \mathbb{R}^d$ is the weight vector learned so far.

Given this setting, let us define the AUC measurement~\cite{Fawcett:2006:IRA:1159473.1159475} for binary classification task. Given a dataset $\D = \{(\mathbf{x}_i,y_i)\in \mathbb{R}^d\times \{-1,+1\}|\ i \in [n]\}$, where $[n]=\{1,2,\ldots,n\}$, we divide it into two sets naturally: the set of positive instances $\D_+ = \{(\mathbf{x}^+_i,+1)|\ i \in [n_+]\}$ and the set of negative instances $\D_- = \{(\mathbf{x}^-_j,-1)|\ j\in[n_-]\}$, where $n_{+}$ and $n_{-}$ are the numbers of positive and negative instances, respectively. For a linear classifier $\mathbf{w}\in\mathbb{R}^d$, its AUC measurement on $\D$ is defined as follows:
\begin{eqnarray*}
	\mbox{AUC}(\mathbf{w})=\frac{\sum^{n_+}_{i=1}\sum^{n_-}_{j=1}\mathbb{I}_{(\mathbf{w}\cdot \mathbf{x}^+_i>\mathbf{w}\cdot \mathbf{x}^-_j)}+\frac{1}{2}\mathbb{I}_{(\mathbf{w}\cdot \mathbf{x}^+_i=\mathbf{w}\cdot \mathbf{x}^-_j)} }{n_+ n_-},
\end{eqnarray*}
where $\mathbb{I}_\pi$ is the indicator function that outputs a $'1'$ if the prediction $\pi$ holds and $'0'$ otherwise. We replace the indicator function with the following convex surrogate, i.e., the square loss from~\cite{DBLP:conf/icml/GaoJZZ13} due to its consistency with AUC~\cite{gao2012consistency}
\begin{eqnarray*}
	\ell(\mathbf{w}, \mathbf{x}^+_i - \mathbf{x}^-_j)=(1-\mathbf{w}\cdot(\mathbf{x}^+_i-\mathbf{x}^-_j))^2,
\end{eqnarray*}
and find the optimal classifier by minimizing the following objective function
\begin{eqnarray}
\L(\w)=\frac{\lambda}{2}\|\w\|_2^2 + \frac{\sum\limits^{n_+}_{i=1}\sum\limits^{n_-}_{j=1}\ell(\mathbf{w}, \mathbf{x}^+_i - \mathbf{x}^-_j)}{2 n_+ n_-} . \label{eqn:jin-1}
\end{eqnarray}
where $\frac{\lambda}{2}\|\w\|_2^2$ is introduced to regularize the complexity of the linear classifier. Note, the optimal $\w_*$ satisfies $\|\w_*\|_2\le 1/\sqrt{\lambda}$ according to the strong duality theorem.

\subsection{Adaptive Online AUC Maximization}
Here, we shall introduce the proposed Adaptive Online AUC Maximization (AdaOAM) algorithm. Following the similar approach in \cite{DBLP:conf/icml/GaoJZZ13}, we modify the loss function $\L(\w)$ in~\eqref{eqn:jin-1} as a sum of losses for individual training instance $\sum\limits^{T}_{t=1}\L_t(\w)$ where
\begin{eqnarray}
\L_t(\w) = \frac{\lambda}{2}\|\w\|_2^2 + \frac{\sum\limits^{t-1}_{i=1}\mathbb{I}_[y_i\neq y_t](1-y_t(\mathbf{x}_t-\mathbf{x}_i)^\top\mathbf{w})^2}{2|i\in [t-1]:y_iy_t=-1|}, \label{eqn:obj1}
\end{eqnarray}
for i.i.d. sequence $\S_t=\{(\x_i,y_i)|i\in[t]\}$, and it is an unbiased estimation to $\L(\w)$. $X_t^+$ and $X_t^-$ are denoted as the sets of positive and negative instances  of $\S_t$ respectively, and $T_t^+$ and $T_t^-$ are their respective cardinalities. Besides, $\L_t(\w)$ is set as $0$ for $T_t^+T_t^-=0$. If $y_t=1$, the gradient of $\L_t$ is
\begin{eqnarray*}
\begin{split}	
	\nabla\L_t(\w)=& \lambda\w+ \x_t\x_t^\top\w -\x_t  \\
	&+\frac{ \sum\limits_{i:y_i=-1}\x_i\hspace{-0.05in}+(\x_i\x_i^\top \hspace{-0.05in}-\hspace{-0.05in} \x_i\x_t^\top \hspace{-0.05in}-\hspace{-0.05in} \x_t\x_i^\top\hspace{-0.03in})\w}{T_t^-}.
\end{split}	
\end{eqnarray*}
If using $\c_t^-=\frac{1}{T_t^-}\sum\limits_{i:y_i=-1}\x_i$ and $S_t^-=\frac{1}{T_t^-}\sum\limits_{i:y_i=-1}(\x_i\x_i^\top-\c_t^-[\c_t^-]^\top)$ to refer to the mean and covariance matrix of negative class, respectively, the gradient of $\L_t$ can be simplified as
\begin{eqnarray}
\nabla\L_t(\w)=\hspace{-0.05in} \lambda\w \hspace{-0.03in}- \x_t \hspace{-0.03in}+ \c_t^- \hspace{-0.05in}+ (\x_t-\c_t^-\hspace{-0.02in})(\x_t-\c_t^-\hspace{-0.02in})^\top\w \hspace{-0.03in}+ S_t^-\w . \label{eqn:gra+1}
\end{eqnarray}
\noindent Similarly, if $y_t=-1$,
\begin{eqnarray}
\nabla\L_t(\w)=\hspace{-0.05in} \lambda\w \hspace{-0.03in}+ \x_t \hspace{-0.03in}- \c_t^+ \hspace{-0.05in}+ (\x_t-\c_t^+\hspace{-0.02in})(\x_t-\c_t^+\hspace{-0.02in})^\top\w \hspace{-0.03in}+ S_t^+\w , \label{eqn:gra-1}
\end{eqnarray}
\noindent where $\c_t^+=\frac{1}{T_t^+}\sum\limits_{i:y_i=1}\x_i$ and $S_t^+=\frac{1}{T_t^+}\sum\limits_{i:y_i=1}(\x_i\x_i^\top - \c_t^+[\c_t^+]^\top)$ are the mean and covariance matrix of positive class, respectively.

Upon obtaining gradient $\g_t=\nabla\L_t(\w_t)$, a simple solution is to move the weight $\w_t$ in the opposite direction of $\g_t$, while keeping $\|\w_{t+1}\|\leq 1/\sqrt{\lambda}$ via the projected gradient update~\cite{DBLP:conf/icml/Zinkevich03}
\begin{eqnarray*}
	\w_{t+1}=\Pi_{\frac{1}{\sqrt{\lambda}}}(\w_t-\eta \g_t)=\mathop{\arg\min}_{\|\w\| \leq \frac{1}{\sqrt{\lambda}}}\|\w-(\w_t-\eta \g_t)\|_2^2,
\end{eqnarray*}
due to $\|\w_{*}\|\leq 1/\sqrt{\lambda}$.

However, the above scheme is clearly insufficient, since it simply assigns different features with the same learning rate. In order to perform feature-wise gradient updating, we propose a second-order gradient optimization method, i.e., Adaptive Gradient Updating strategy, as inspired by~\cite{DBLP:journals/jmlr/DuchiHS11}. Specifically, we denote $g_{1:t}=[\g_1 ... \g_t]$ as the matrix obtained by concatenating the gradient sequences. The $i$-th row of this matrix is $\g_{1:t,i}$, which is also a concatenation of the $i$-th component of each gradient. In addition, we define the outer product matrix $G_t=\sum_{\tau=1}^t \g_{\tau}\g_{\tau}^\top$. Using these notations, the generalization of the standard adaptive gradient descent leads to the following weight update
\begin{eqnarray*}
	\w_{t+1}=\Pi_{\frac{1}{\sqrt{\lambda}}}^{G_t^{1/2}}(\w_t-\eta G_t^{-1/2} \g_t),
\end{eqnarray*}
where $\Pi_{\frac{1}{\sqrt{\lambda}}}^A(\u)=\mathop{\arg\min}_{\|\w\| \leq \frac{1}{\sqrt{\lambda}}}\|\w-\u\|_A=\mathop{\arg\min}_{\|\w\| \leq \frac{1}{\sqrt{\lambda}}}\langle\w-\u,A(\w-\u) \rangle$, which is the Mahalanobis norm to denote the projection of a point $\u$ onto $\{\w|\|\w\| \le \frac{1}{\sqrt{\lambda}}\}$.

However, an obvious drawback of the above update lies in the significantly large amount of computational efforts needed to handle high-dimensional data tasks since it requires the calculations of the root and inverse root of the outer product matrix $G_t$. In order to make the algorithm more efficient, we use the diagonal proxy of $G_t$ and thus the update becomes
\begin{eqnarray}\label{eqn:diag-update-rule}
\w_{t+1}=\Pi_{\frac{1}{\sqrt{\lambda}}}^{diag(G_t)^{1/2}}(\w_t-\eta diag(G_t)^{-1/2} \g_t).
\end{eqnarray}
In this way, both the root and inverse root of $diag(G_t)$ can be computed in linear time. Furthermore, as we discuss later, when the gradient vectors are sparse, the update above can be conducted more efficiently in time proportional to the support of the gradient.

Another issue with the updating rule~\eqref{eqn:diag-update-rule} to be concern is that the $diag(G_t)$ may not be invertible in all coordinates. To address this issue, we replace it with $H_t=\delta I + diag(G_t)^{1/2}$, where $\delta>0$ is a smooth parameter. The parameter $\delta$ is introduced to make the diagonal matrix invertible and the algorithm robust, which is usually set as a very small value so that it has little influence on the learning results. Given $H_t$, the update of the feature-wise adaptive update can be computed as:
\begin{eqnarray}\label{eqn:update-H}
\w_{t+1}=\Pi_{\frac{1}{\sqrt{\lambda}}}^{H_t}(\w_t-\eta H_t^{-1} \g_t).
\end{eqnarray}

The intuition of this update rule~\eqref{eqn:update-H} is very natural, which considers the rare occurring features as more informative and discriminative than those frequently occurring features. Thus, these informative rare occurring features should be updated with higher learning rates by incorporating the geometrical property of the data observed in earlier stages. Besides, by using the previously observed gradients, the update process can mitigate the effects of noise and speed up the convergence rate intuitively.

So far, we have reached the key framework of the basic update rule for model learning except the details on gradient calculation. From the gradient derivation equations~\eqref{eqn:gra+1} and~\eqref{eqn:gra-1}, we need to maintain and update the mean vectors and covariance matrices of the incoming instance sequences observed. The mean vectors are easy to be computed and stored here, while the covariance matrices are a bit difficult to be updated due to the online setting. Therefore, we provide a simplified update scheme for covariance matrix computation to address this issue. For an incoming instance sequence $\x_1, \x_2, \dots, \x_t$, the covariance matrix $S_t$ is given by

\begin{small}
\begin{eqnarray*}
\begin{split}	
	S_t  =& \sum\limits_{i=1}^t \frac{\x_i\x_i^\top-\c_t[\c_t]^\top}{t} \\
	 =& \sum\limits_{i=1}^t \frac{\x_i\x_i^\top}{t}-\c_t[\c_t]^\top  \\
	 =& \frac{t-1}{t}\sum\limits_{i=1}^{t-1}\frac{\x_i\x_i^\top}{t-1}+\frac{\x_t\x_t^\top}{t}-\c_t[\c_t]^\top \\ 
	 =& \frac{t-1}{t}(\sum\limits_{i=1}^{t-1}\frac{\x_i\x_i^\top}{t-1}-\c_{t-1}[\c_{t-1}]^\top+\c_{t-1}[\c_{t-1}]^\top) +\frac{\x_t\x_t^\top}{t} \\&-\c_t[\c_t]^\top  \\
	 =& \frac{t-1}{t}(S_{t-1}+\c_{t-1}[\c_{t-1}]^\top)+\frac{\x_t\x_t^\top}{t}-\c_t[\c_t]^\top  \\
	 =& S_{t-1} -\frac{1}{t}(S_{t-1}+\c_{t-1}[\c_{t-1}]^\top-\x_t\x_t^\top)+\c_{t-1}[\c_{t-1}]^\top\\
	 &-\c_t[\c_t]^\top .
\end{split}	
\end{eqnarray*}
\end{small}

Then, in our gradient update, if setting $\Gamma_t^+=S_t^+$ and $\Gamma_t^-=S_t^-$, the covariance matrices are updated as follows:
\begin{small}
	\begin{eqnarray*}
		&&\hspace{-0.25in}\Gamma_t^+\hspace{-0.05in} = \Gamma_{t-1}^+ \hspace{-0.05in}+\hspace{-0.02in} \c_{t-1}^+[\c_{t-1}^+]^\top \hspace{-0.05in}-\hspace{-0.02in} \c_{t}^+[\c_{t}^+]^\top \hspace{-0.05in}+\hspace{-0.02in} \frac{\x_t\x_t^\top\hspace{-0.05in}-\Gamma_{t-1}^+ \hspace{-0.05in}- \c_{t-1}^+[\c_{t-1}^+]^\top}{T_t^+}, \\
		&&\hspace{-0.25in}\Gamma_t^- \hspace{-0.05in}= \Gamma_{t-1}^- \hspace{-0.05in}+ \c_{t-1}^-[\c_{t-1}^-]^\top \hspace{-0.05in}- \c_{t}^-[\c_{t}^-]^\top \hspace{-0.05in}+ \frac{\x_t\x_t^\top\hspace{-0.05in}-\hspace{-0.03in}\Gamma_{t-1}^- \hspace{-0.05in}- \hspace{-0.03in}\c_{t-1}^-[\c_{t-1}^-]^\top}{T_t^-}.
	\end{eqnarray*}
\end{small}

It can be observed that the above updates fit the online setting well. Finally, Algorithm~\ref{alg:AdaOAM} summarizes the proposed AdaOAM method.

\begin{algorithm}[htbp]
	\caption{The Adaptive OAM Algorithm (AdaOAM)} \label{alg:AdaOAM}
	\begin{algorithmic}
		\STATE {\bf Input}: The regularization parameter $\lambda$, the learning rate $\{\eta_t\}_{t=1}^T$, the smooth parameter $\delta\ge 0$.
		\STATE {\bf Output}: Updated classifier $\w_t$.
		\STATE {\bf Variables}: $s\in \mathbb{R}^d$, $H\in \mathbb{R}^{d\times d}$, $g_{1:t,i}\in \mathbb{R}^t$ for $i\in \{1,\dots,d\}$.
		\STATE {\bf Initialize} $\mathbf{w}_0 = \mathbf{0}, \c^+_{0}= \c^-_{0}=\mathbf{0}, \Gamma^+_{0} = \Gamma^-_{0} = [0]_{d\times d}, T^+_{0} = T^-_{0} = 0, g_{1:0} = [\,]$.
		\FOR{$t=1,2,\ldots,T$}
		\STATE Receive an incoming instance $(\x_t,y_t)$;
		\IF{$y_t = +1$}
		\STATE $T_t^+ = T_{t-1}^+ + 1$, $T_t^- = T_{t-1}^-$;
		\STATE $\c_t^{+} = \c_{t-1}^+ + \frac{1}{T_t^+}(\x_t-\c_{t-1}^+)$ and $\c_t^{-} = \c_{t-1}^-$;
		\STATE Update $\Gamma^+_{t}$ and $\Gamma^-_{t} = \Gamma^-_{t-1}$;
		\STATE Receive gradient $\g_t=\nabla\L_t(\w)$;
		\STATE Update $g_{1:t}=[g_{1:t-1} \, \g_t]$, $s_{t,i}=\|g_{1:t,i}\|_2$;
		\STATE $H_t = \delta I + diag(s_t)$;
		\STATE $\hat{\g}_t=H_t^{-1} \g_t$;
		\ELSE
		\STATE $T_t^- = T_{t-1}^- + 1$, $T_t^+ = T_{t-1}^+$;
		\STATE $\c_t^{-} = \c_{t-1}^- + \frac{1}{T_t^-}(\x_t-\c_{t-1}^-)$ and $\c_t^{+} = \c_{t-1}^+$;
		\STATE Update $\Gamma^-_{t}$ and $\Gamma^+_{t} = \Gamma^+_{t-1}$;
		\STATE Receive gradient $\g_t=\nabla\L_t(\w)$;
		\STATE Update $g_{1:t}=[g_{1:t-1} \, \g_t]$, $s_{t,i}=\|g_{1:t,i}\|_2$;
		\STATE $H_t = \delta I + diag(s_t)$;
		\STATE $\hat{\g}_t=H_t^{-1} \g_t$;
		\ENDIF
		\STATE $\w_t = \Pi_{\frac{1}{\sqrt{\lambda}}}^{H_t}(\w_{t-1} - \eta_t\hat{\g}_t)$;
		\ENDFOR
	\end{algorithmic}
\end{algorithm}

\subsection{Fast AdaOAM Algorithm for High-dimensional Sparse Data}

A characteristic of the proposed AdaOAM algorithm described above is that it exploits the full features for weight learning, which may not be suitable or scalable for high-dimensional sparse data tasks. For example, in spam email detection tasks, the length of the vocabulary list can reach the million scale. Although the number of the features is large, many feature inputs are zero and do not provide any information to the detection task. The research work in~\cite{DBLP:journals/jsw/YounM07} has shown that the classification performance saturates with dozens of features out of tens of thousands of features.

Taking the cue, in order to improve the efficiency and scalability of the AdaOAM algorithm on working with high-dimensional sparse data, we propose the Sparse AdaOAM algorithm (SAdaOAM) which learns a sparse linear classifier that contains a limited size of active features. In particular, SAdaOAM addresses the issue of sparsity in the learned model and maintains the efficacy of the original AdaOAM at the same time. To summarize, SAdaOAM has two benefits over the original AdaOAM algorithm: simple covariance matrix update and sparse model update. Next, we introduce these properties separately.

First, we employ a simpler covariance matrix update rule in the case of handling high-dimensional sparse data when compared to AdaOAM. The motivating factor behind a different update scheme here is because using the original covariance update rule of the AdaOAM method on high-dimensional data would lead to extreme high computational and storage costs, i.e. several matrix operations among multiple variables in the $\Gamma$ update formulations would be necessary. Therefore, we fall back to the standard definition of the covariance matrix and consider a simpler method for updates. Since the standard definition of the covariance matrix is $S_t  = \sum\limits_{i=1}^t \frac{\x_i\x_i^\top}{t}-\c_t[\c_t]^\top$, we just need to maintain the mean vector and the outer product of the instance at each iteration for the covariance update. In this case, we denote $Z_t^+=\sum\limits_{i=1}^t \x_i^+[\x_i^+]^\top$ and $Z_t^-=\sum\limits_{i=1}^t \x_i^-[\x_i^-]^\top$. Then, the covariance matrices $S_t^+$ and $S_t^-$ can be formulated as
\begin{eqnarray*}
	S_t^+=Z_t^+/T_t^+ - \c_t^+[\c_t^+]^\top  \quad \text{and}  \quad
	S_t^-=Z_t^-/T_t^- - \c_t^-[\c_t^-]^\top.
\end{eqnarray*}
At each iteration, one only needs to update $Z_t^{y_t}$ with $Z_t^{y_t}=Z_{t-1}^{y_t}+\x_t[\x_t]^\top$ and the mean vectors of the positive and negative instances, respectively, in the covariance matrices $S_t^+$ and $S_t^-$. With the above update scheme, a lower computational and storage costs is attained since most of the elements in the covariance matrices are zero on high-dimensional sparse data.

After presenting the efficient scheme of updating the covariance matrices for high-dimensional sparse data, we proceed next to present the method for addressing the sparsity in the learned model and second-order adaptive gradient updates simultaneously. Here we consider to impose the soft-constraint $\ell_1$ norm regularization $\varphi(\w) = \theta \|\w\|_1$ to the objective function~\eqref{eqn:obj1}. So, the new objective function is
\begin{eqnarray}
\L_t(\w) = \frac{\lambda}{2}\|\w\|_2^2 + \frac{\sum\limits^{t-1}_{i=1}\mathbb{I}_[y_i\neq y_t](1-y_t(\mathbf{x}_t-\mathbf{x}_i)^\top\mathbf{w})^2}{2|i\in [t-1]:y_iy_t=-1|} + \theta \|\w\|_1 , \label{eqn:obj2}
\end{eqnarray}
In order to optimize this objective function, we apply the composite mirror descent method~\cite{paultseng2008} that is able to achieve a trade-off between the immediate adaptive gradient term $\g_t$ and the regularizer $\varphi(\w)$. We denote the $i$-th diagonal element of the matrix $H_t$ as $H_{t,ii}=\delta I + \|g_{1:t,i}\|_2$. Then, we give the derivation for the composite mirror descent gradient updates with $\ell_1$ regularization.

Following the framework of the composite mirror descent update in~\cite{DBLP:journals/jmlr/DuchiHS11}, the update needed to solve is
\begin{eqnarray}
\w_{t+1}=\mathop{\arg\min}\limits_{\|\w\|\le\frac{1}{\sqrt{\lambda}}}\{\eta\langle \g_t,\w\rangle + \eta\varphi(\w) + B_{\psi_t}(\w,\w_t)\}, \label{eqn:4}
\end{eqnarray}
where $\g_t=\nabla\L_t(\w)$ and $B_{\psi_t}(\w,\w_t)$ is the Bregman divergence associated with $\psi_t(\g_t)=\langle \g_t,H_t \g_t\rangle$ (see details in the proof of Theorem~\ref{thm:1}). After the expansion, this update amounts to
\begin{eqnarray}
\mathop{\min}\limits_{\w} \eta\langle \g_t,\w\rangle + \eta\varphi(\w) + \frac{1}{2}\langle \w-\w_t, H_t(\w-\w_t)\rangle.
\end{eqnarray}
For easier derivation, we rearrange and rewrite the above objective function as
\begin{eqnarray*}
	\mathop{\min}\limits_{\w} \langle \eta\g_t - H_t\w_t,\w\rangle + \frac{1}{2}\langle \w, H_t\w\rangle + \frac{1}{2}\langle \w_t, H_t\w_t\rangle + \eta\theta\|\w\|_1.
\end{eqnarray*}
Let $\hat{\w}$ denote the optimal solution of the above optimization problem. Standard subgradient calculus indicates that when $|\w_{t,i}-\frac{\eta}{H_{t,ii}}\g_{t,i}| \leq \frac{\eta\theta}{H_{t,ii}}$, the solution is $\hat{\w}_i=0$. Similarly, when $\w_{t,i}-\frac{\eta}{H_{t,ii}}\g_{t,i} < -\frac{\eta\theta}{H_{t,ii}}$, then $\hat{\w}_i<0$, the objective is differentiable, and the solution is achieved by setting the gradient to zero:
\begin{eqnarray*}
	\eta \g_{t,i}-H_{t,ii}\w_{t,i}-H_{t,ii}\hat{\w}_i - \eta \theta = 0,
\end{eqnarray*}
so that
\begin{eqnarray*}
	\hat{\w}_i = \frac{\eta}{H_{t,ii}}\g_{t,i}-\w_{t,i}-\frac{\eta\theta}{H_{t,ii}}.
\end{eqnarray*}
Similarly, when $\w_{t,i}-\frac{\eta}{H_{t,ii}}\g_{t,i} > \frac{\eta\theta}{H_{t,ii}}$, then $\hat{\w}_i>0$, and the solution is
\begin{eqnarray*}
	\hat{\w}_i = \w_{t,i}-\frac{\eta}{H_{t,ii}}\g_{t,i}-\frac{\eta\theta}{H_{t,ii}}.
\end{eqnarray*}
Combining these three cases, we obtain the coordinate-wise update results for $\w_{t+1,i}$ :
\begin{eqnarray*}
	\w_{t+1,i}=sign(\w_{t,i}-\frac{\eta}{H_{t,ii}}\g_{t,i})[|\w_{t,i}-\frac{\eta}{H_{t,ii}}\g_{t,i}|-\frac{\eta\theta}{H_{t,ii}}]_+.
\end{eqnarray*}

The complete sparse online AUC maximization approach using the adaptive gradient updating algorithm (SAdaOAM) is summarized in Algorithm~\ref{alg:AdaOAMs}.

\begin{algorithm}[htbp]
	\caption{The Sparse AdaOAM Algorithm (SAdaOAM)} \label{alg:AdaOAMs}
	\begin{algorithmic}
		\STATE {\bf Input}: The regularization parameters $\lambda$ and $\theta$, the learning rate $\{\eta_t\}_{t=1}^T$, the smooth parameter $\delta\ge 0$.
		\STATE {\bf Output}: Updated classifier $\w_{t+1}$.
		\STATE {\bf Variables}: $s\in \mathbb{R}^d$, $H\in \mathbb{R}^{d\times d}$, $g_{1:t,i}\in \mathbb{R}^t$ for $i\in \{1,\dots,d\}$.
		\STATE {\bf Initialize} $\mathbf{w}_0 = \mathbf{0}, \c^+_{0}= \c^-_{0}=\mathbf{0}, Z^+_{0} = Z^-_{0} = [0]_{d\times d}, T^+_{0} = T^-_{0} = 0, g_{1:0} = [\,]$.  \vspace{-0.2in}
		\FOR{$t=1,2,\ldots,T$}
		\STATE Receive an incoming instance $(\x_t,y_t)$;
		\IF{$y_t = +1$}
		\STATE $T_t^+ = T_{t-1}^+ + 1$, $T_t^- = T_{t-1}^-$;
		\STATE $\c_t^{+} = \c_{t-1}^+ + \frac{1}{T_t^+}(\x_t-\c_{t-1}^+)$ and $\c_t^{-} = \c_{t-1}^-$;
		\STATE Update $Z^+_{t}=Z^+_{t-1}+\x_t[\x_t]^\top$ and $Z^-_{t} = Z^-_{t-1}$;
		\STATE Receive gradient $\g_t=\nabla\L_t(\w)$;
		\STATE Update $g_{1:t}=[g_{1:t-1} \, \g_t]$, $s_{t,i}=\|g_{1:t,i}\|_2$;
		\STATE $H_{t,ii}=\delta I + \|g_{1:t,i}\|_2$;
		\STATE $\w_{t+1,i}=sign(\w_{t,i}-\frac{\eta}{H_{t,ii}}\g_{t,i})[|\w_{t,i}-\frac{\eta}{H_{t,ii}}\g_{t,i}|-\frac{\eta\theta}{H_{t,ii}}]_+$;
		\ELSE
		\STATE $T_t^- = T_{t-1}^- + 1$, $T_t^+ = T_{t-1}^+$;
		\STATE $\c_t^{-} = \c_{t-1}^- + \frac{1}{T_t^-}(\x_t-\c_{t-1}^-)$ and $\c_t^{+} = \c_{t-1}^+$;
		\STATE Update $Z^-_{t}=Z^-_{t-1}+\x_t[\x_t]^\top$ and $Z^+_{t} = Z^-_{t-1}$;
		\STATE Receive gradient $\g_t=\nabla\L_t(\w)$;
		\STATE Update $g_{1:t}=[g_{1:t-1} \, \g_t]$, $s_{t,i}=\|g_{1:t,i}\|_2$;
		\STATE $H_{t,ii}=\delta I + \|g_{1:t,i}\|_2$;
		\STATE $\w_{t+1,i}=sign(\w_{t,i}-\frac{\eta}{H_{t,ii}}\g_{t,i})[|\w_{t,i}-\frac{\eta}{H_{t,ii}}\g_{t,i}|-\frac{\eta\theta}{H_{t,ii}}]_+$;
		\ENDIF
		\ENDFOR
	\end{algorithmic}
\end{algorithm}

From the Algorithm~\ref{alg:AdaOAMs}, it is observed that we perform ``lazy" computation when the gradient vectors are sparse~\cite{DBLP:journals/jmlr/DuchiHS11}. Suppose that, from iteration step $t_0$ to $t$, the $i$-th component of the gradient is ``0". Then, we can evaluate the updates on demand since $H_{t,ii}$ remains intact. Therefore, at iteration step $t$ when $\w_{t,i}$ is needed, the update will be
\begin{eqnarray*}
	\w_{t,i}=sign(\w_{t_0},i)[|\w_{t_0},i|-\frac{\eta\theta}{H_{t_0,ii}}(t-t_0)]_+ ,
\end{eqnarray*}
where $[t]_+$ means $\max(0,t)$. Obviously, this type of "lazy" updates enjoys high efficiency.

\section{Theoretical Analysis}

In this section, we provide the regret bounds for the proposed set of AdaOAM algorithms for handling both regular and high-dimensional sparse data, respectively.

\subsection{Regret Bounds with Regular Data}
Firstly, we introduce two lemmas as follows, which will be used to facilitate our subsequent analyses.
\begin{lemma}\label{lemma:1}
	Let $\g_t$, $g_{1:t}$ and $s_t$ be defined same in the Algorithm~\ref{alg:AdaOAM}. Then
	\begin{eqnarray*}\vspace{-0.1in}
		\sum\limits_{t=1}^T\langle \g_t,diag(s_t)^{-1} \g_t\rangle \leq 2\sum\limits_{i=1}^d \|g_{1:T,i}\|_2.
	\end{eqnarray*}
\end{lemma}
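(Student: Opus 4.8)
The plan is to exploit the diagonal structure of $diag(s_t)$ to reduce the claim to a one-dimensional summation over each coordinate, and then to bound each such sum by a telescoping argument. First I would rewrite the left-hand side coordinate-wise. Since $diag(s_t)$ is diagonal with entries $s_{t,i}=\|g_{1:t,i}\|_2$, the quadratic form expands as $\langle \g_t, diag(s_t)^{-1}\g_t\rangle = \sum_{i=1}^d g_{t,i}^2/\|g_{1:t,i}\|_2$. Interchanging the two finite sums over $t$ and $i$ reduces the claim to showing, for each fixed coordinate $i$, the scalar inequality $\sum_{t=1}^T g_{t,i}^2/\|g_{1:t,i}\|_2 \le 2\|g_{1:T,i}\|_2$.

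To prove this scalar inequality I would fix $i$ and abbreviate $a_t = g_{t,i}^2 \ge 0$ and $S_t = \sum_{\tau=1}^t a_\tau = \|g_{1:t,i}\|_2^2$, with the convention $S_0 = 0$, so that the target becomes $\sum_{t=1}^T a_t/\sqrt{S_t} \le 2\sqrt{S_T}$. The key step is the pointwise bound $a_t/\sqrt{S_t} \le 2(\sqrt{S_t}-\sqrt{S_{t-1}})$. This follows by rationalizing the difference of square roots: $2(\sqrt{S_t}-\sqrt{S_{t-1}}) = 2a_t/(\sqrt{S_t}+\sqrt{S_{t-1}}) \ge 2a_t/(2\sqrt{S_t}) = a_t/\sqrt{S_t}$, where the inequality uses $S_{t-1}\le S_t$. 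Summing over $t$ telescopes the right-hand side to $2\sqrt{S_T}-2\sqrt{S_0} = 2\sqrt{S_T} = 2\|g_{1:T,i}\|_2$, which is exactly the per-coordinate bound. Summing these $d$ coordinate inequalities then yields the right-hand side $2\sum_{i=1}^d \|g_{1:T,i}\|_2$ of the lemma.

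The only mild subtlety, and the thing I would be most careful about, is the trials where $S_t = 0$, which would formally produce a term $0/0$ on the left. This is harmless because $S_t = 0$ forces $a_t = 0$, so the corresponding summand $a_t/\sqrt{S_t}$ is itself zero and may simply be dropped; equivalently, one restricts both the sum and the telescoping to the trials at and after coordinate $i$ first becomes nonzero. With this convention the rationalization step and the telescoping cancellation go through verbatim.

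As an alternative to the telescoping form, I would note that the same scalar inequality can be obtained by a short induction on $T$: the base case is immediate, and the inductive step reduces to verifying $2\sqrt{S_T-a_T}+a_T/\sqrt{S_T}\le 2\sqrt{S_T}$, which is just the pointwise bound above rearranged. I expect no genuine obstacle here—the result is essentially the AdaGrad summation lemma—so the effort is entirely in presenting the coordinate reduction and the square-root telescoping cleanly.
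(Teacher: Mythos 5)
Your proof is correct. Note that the paper itself does not prove this lemma at all: it simply imports it as Lemma~4 of~\cite{DBLP:journals/jmlr/DuchiHS11}, so there is no in-paper argument to diverge from. Your coordinate-wise reduction followed by the telescoping bound $a_t/\sqrt{S_t}\le 2\bigl(\sqrt{S_t}-\sqrt{S_{t-1}}\bigr)$ (with the $0/0$ convention for coordinates whose gradient history is still zero) is precisely the standard proof of that cited result---Duchi et al.\ present it as the induction you mention in your last paragraph, which is algebraically the same inequality rearranged---so your proposal correctly and self-containedly supplies the argument the paper delegates to its citation.
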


\begin{lemma}\label{lemma:2}
	Let the sequence $\{\w_t$, $g_{1:t}\} \subset \mathbb{R}^d $ be generated by the composite mirror descent update in Equation~\eqref{eqn:4} and assume that $sup_{\w,\u\in \chi}\|\w-\u\|_{\infty} \leq D_{\infty}$.
	Using learning rate $\eta=D_{\infty}/\sqrt{2}$, for any optimal $\w_*$, the following bound holds
	\begin{eqnarray*}\vspace{-0.1in}
		\sum\limits_{t=1}^T[ \L_t(\w_t)-\L_t(\w_*)] \leq \sqrt{2}D_{\infty}\sum\limits_{i=1}^d \|g_{1:T,i}\|_2 .
	\end{eqnarray*}
\end{lemma}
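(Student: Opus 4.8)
The plan is to follow the adaptive composite-mirror-descent analysis in the style of~\cite{DBLP:journals/jmlr/DuchiHS11} and reduce the whole bound to Lemma~\ref{lemma:1}. Write $\L_t = f_t + \varphi$, where $f_t$ collects the smooth terms (the $\frac{\lambda}{2}\|\w\|_2^2$ penalty plus the pairwise square loss) and $\varphi(\w)=\theta\|\w\|_1$ is the regularizer, and recall from the expansion of~\eqref{eqn:4} that $B_{\psi_t}(\w,\u)=\frac{1}{2}\langle \w-\u, H_t(\w-\u)\rangle$ with $H_t=\delta I + diag(s_t)$ and $s_{t,i}=\|g_{1:t,i}\|_2$. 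The first step is a per-round inequality. Starting from the first-order optimality condition of the update~\eqref{eqn:4} (the generalized Pythagorean identity for composite proximal steps) and combining it with the convexity bound $f_t(\w_t)-f_t(\w_*)\le \langle \g_t,\w_t-\w_*\rangle$, I would derive
\[
f_t(\w_t)+\varphi(\w_{t+1})-f_t(\w_*)-\varphi(\w_*)\le \tfrac{1}{\eta}\big[B_{\psi_t}(\w_*,\w_t)-B_{\psi_t}(\w_*,\w_{t+1})\big]+\langle \g_t,\w_t-\w_{t+1}\rangle-\tfrac{1}{\eta}B_{\psi_t}(\w_{t+1},\w_t).
\]
A Fenchel--Young step, $\langle \g_t,\w_t-\w_{t+1}\rangle\le \frac{\eta}{2}\langle \g_t,H_t^{-1}\g_t\rangle+\frac{1}{2\eta}\|\w_t-\w_{t+1}\|_{H_t}^2$, shows that the last two terms sum to at most $\frac{\eta}{2}\langle \g_t,H_t^{-1}\g_t\rangle$.

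Next I would sum over $t=1,\dots,T$, at which point two bookkeeping issues appear. First, the regularizer is evaluated at $\w_{t+1}$ rather than at $\w_t$; shifting the index and using $\w_0=\mathbf{0}$ (so $\varphi(\w_1)=0$) together with $\varphi(\w_{T+1})\ge 0$ lets me replace $\sum_t \varphi(\w_{t+1})$ by $\sum_t \varphi(\w_t)$ without enlarging the bound, so the left-hand side becomes exactly $\sum_t[\L_t(\w_t)-\L_t(\w_*)]$. Second, since $H_t$ varies with $t$, the Bregman terms do not telescope cleanly: rewriting $\sum_t[B_{\psi_t}(\w_*,\w_t)-B_{\psi_t}(\w_*,\w_{t+1})]$ leaves the ``increment'' terms $\frac{1}{2}\sum_t\langle \w_*-\w_t,(H_t-H_{t-1})(\w_*-\w_t)\rangle$.

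The crucial observation is that $s_{t,i}=\|g_{1:t,i}\|_2$ is nondecreasing in $t$, so $H_t-H_{t-1}=diag(s_t-s_{t-1})$ is diagonal and positive semidefinite. Hence each increment is at most $D_\infty^2$ times its trace, and telescoping the trace yields $\frac{D_\infty^2}{2}\sum_i\|g_{1:T,i}\|_2$ (taking $\delta\to 0$, or absorbing the harmless $\frac{\delta d}{2}D_\infty^2$ term). Assembling the pieces, the Bregman part contributes $\frac{1}{\eta}\cdot\frac{D_\infty^2}{2}\sum_i\|g_{1:T,i}\|_2$, and applying Lemma~\ref{lemma:1} through $\sum_t\langle \g_t,H_t^{-1}\g_t\rangle\le \sum_t\langle \g_t,diag(s_t)^{-1}\g_t\rangle\le 2\sum_i\|g_{1:T,i}\|_2$ (using $H_t\succeq diag(s_t)$) bounds the gradient part by $\eta\sum_i\|g_{1:T,i}\|_2$. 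This gives
\[
\sum_{t=1}^T[\L_t(\w_t)-\L_t(\w_*)]\le\Big(\tfrac{D_\infty^2}{2\eta}+\eta\Big)\sum_{i=1}^d\|g_{1:T,i}\|_2,
\]
and the choice $\eta=D_\infty/\sqrt{2}$ minimizes the coefficient, producing exactly $\sqrt{2}D_\infty\sum_i\|g_{1:T,i}\|_2$.

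I expect the main obstacle to be the non-telescoping Bregman terms: correctly exploiting the monotonicity of $s_t$ and the $\ell_\infty$-diameter bound $D_\infty$ to convert $H_t-H_{t-1}$ into the single trace quantity $\sum_i\|g_{1:T,i}\|_2$ is the technical heart of the argument. The index shift for $\varphi$ is a minor but easy-to-overlook point, and everything else is routine once Lemma~\ref{lemma:1} is invoked.
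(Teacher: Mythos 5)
Your proposal is correct and takes exactly the route the paper itself relies on: the paper offers no proof of Lemma~\ref{lemma:2} at all, importing it verbatim as Corollary 1 (via Theorem 5) of~\cite{DBLP:journals/jmlr/DuchiHS11}, and your derivation --- the per-round composite mirror descent inequality, the Fenchel--Young absorption of $\langle \g_t,\w_t-\w_{t+1}\rangle$, the control of the non-telescoping Bregman increments through the monotonicity of $s_{t,i}=\|g_{1:t,i}\|_2$ and the $\ell_\infty$-diameter bound, the invocation of Lemma~\ref{lemma:1} for the gradient terms, and the choice $\eta=D_\infty/\sqrt{2}$ --- is precisely the proof of that cited result. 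Nothing further is needed.
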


These two lemmas are actually the Lemma 4 and Corollary 1 in the paper~\cite{DBLP:journals/jmlr/DuchiHS11}.

Using these two lemmas, we can derive the following theorem for the proposed AdaOAM algorithm.
\begin{thm}\label{thm:1}
	Assume $\|\w_t\| \le 1/\sqrt{\lambda},\quad (\forall t\in[T])$ and the diameter of $\chi=\{\w|\|\w\|\le\frac{1}{\sqrt{\lambda}}\}$ is bounded via $sup_{\w,\u\in \chi}\|\w-\u\|_{\infty} \leq D_{\infty}$, we have
	\begin{eqnarray*}\vspace{-0.1in}
		\sum\limits_{t=1}^T[ \L_t(\w_t)-\L_t(\w_*)] \leq 2D_{\infty}\sum\limits_{i=1}^d \sqrt{\sum\limits_{t=1}^T[(\lambda\w_{t,i})^2+ C(r_{t,i})^2]},\vspace{-0.1in}
	\end{eqnarray*}
	where $C\le(1+\frac{2}{\sqrt{\lambda}})^2$, and $r_{t,i}=\max_{j<t}|\x_{j,i}-\x_{t,i}|$.\vspace{-0.1in}
\end{thm}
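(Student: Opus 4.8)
The plan is to start from Lemma~\ref{lemma:2}, which already does the heavy lifting on the online side. Taking the regularizer $\varphi\equiv0$ (so that the composite update~\eqref{eqn:4} collapses to the projected adaptive update of Algorithm~\ref{alg:AdaOAM}) and the prescribed $\eta=D_\infty/\sqrt2$, Lemma~\ref{lemma:2} gives
\begin{eqnarray*}
\sum_{t=1}^T[\L_t(\w_t)-\L_t(\w_*)]\le\sqrt2\,D_\infty\sum_{i=1}^d\|g_{1:T,i}\|_2=\sqrt2\,D_\infty\sum_{i=1}^d\sqrt{\sum_{t=1}^T g_{t,i}^2}.
\end{eqnarray*}
Thus the entire problem reduces to a coordinate-wise estimate of $g_{t,i}^2$; once $g_{t,i}^2\le2[(\lambda\w_{t,i})^2+C\,r_{t,i}^2]$ is in hand, the outer sum-of-square-roots structure of the claim follows mechanically.

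I would handle the case $y_t=+1$ (the case $y_t=-1$ is identical after swapping $\c_t^-,S_t^-$ for $\c_t^+,S_t^+$) and read the gradient off~\eqref{eqn:gra+1}, writing $g_{t,i}=\lambda\w_{t,i}+v_{t,i}$ with $v_{t,i}=-(x_{t,i}-c^-_{t,i})+[M_t\w_t]_i$ and $M_t=(\x_t-\c_t^-)(\x_t-\c_t^-)^\top+S_t^-$. The key step, which I expect to be the main obstacle, is to recombine $M_t$ into a form that exposes the raw coordinate differences defining $r_{t,i}$. Using $\c_t^-=\frac1{T_t^-}\sum_{j:y_j=-1}\x_j$, a direct expansion yields the identity
\begin{eqnarray*}
M_t=(\x_t-\c_t^-)(\x_t-\c_t^-)^\top+S_t^-=\frac1{T_t^-}\sum_{j:y_j=-1}(\x_t-\x_j)(\x_t-\x_j)^\top,
\end{eqnarray*}
i.e. the rank-one term and the covariance merge into an average of outer products of the differences $\x_t-\x_j$. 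This is exactly what allows $r_{t,i}=\max_{j<t}|x_{j,i}-x_{t,i}|$ to enter; attempting to bound $[S_t^-\w_t]_i$ without this recombination yields a strictly larger constant.

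Given the identity, the remaining estimates are routine. Assuming, as is standard in this setting, that the instances are normalized with $\|\x_t\|_2\le1$ (hence $\|\x_t-\x_j\|_2\le2$) and using $\|\w_t\|_2\le1/\sqrt\lambda$, I would bound $|x_{t,i}-c^-_{t,i}|\le\frac1{T_t^-}\sum_j|x_{t,i}-x_{j,i}|\le r_{t,i}$ and, by Cauchy--Schwarz applied termwise, $|[M_t\w_t]_i|\le\frac1{T_t^-}\sum_j|x_{t,i}-x_{j,i}|\,|(\x_t-\x_j)^\top\w_t|\le\frac2{\sqrt\lambda}r_{t,i}$. The triangle inequality then gives $|v_{t,i}|\le(1+\frac2{\sqrt\lambda})r_{t,i}$, so $v_{t,i}^2\le C\,r_{t,i}^2$ with $C=(1+\frac2{\sqrt\lambda})^2$. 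Finally, $(a+b)^2\le2a^2+2b^2$ yields $g_{t,i}^2\le2[(\lambda\w_{t,i})^2+C\,r_{t,i}^2]$; summing over $t$, taking the square root, and substituting into the Lemma~\ref{lemma:2} bound converts the leading $\sqrt2$ into $\sqrt2\cdot\sqrt2=2$, producing precisely $2D_\infty\sum_{i=1}^d\sqrt{\sum_{t=1}^T[(\lambda\w_{t,i})^2+C\,r_{t,i}^2]}$.
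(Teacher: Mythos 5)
Your proof is correct and follows essentially the same route as the paper's: invoke Lemma~\ref{lemma:2} (with $\varphi\equiv 0$) to reduce the regret to $\sqrt{2}D_\infty\sum_{i=1}^d\|g_{1:T,i}\|_2$, then establish the coordinate-wise bound $g_{t,i}^2\le 2[(\lambda\w_{t,i})^2+C(r_{t,i})^2]$ with $C=(1+\tfrac{2}{\sqrt\lambda})^2$ and combine. Your recombination identity $(\x_t-\c_t^-)(\x_t-\c_t^-)^\top+S_t^-=\frac{1}{T_t^-}\sum_{j:y_j=-1}(\x_t-\x_j)(\x_t-\x_j)^\top$ simply reverses the paper's earlier simplification of the gradient, recovering exactly the pairwise-difference form $\lambda\w_{t,i}+\frac{1}{T_t^-}\sum_{j}(1-y_t\langle\x_t-\x_j,\w_t\rangle)y_t(\x_{j,i}-\x_{t,i})$ from which the paper derives the same coordinate bound (and your version makes explicit the normalization $\|\x_t\|_2\le 1$ that the paper uses implicitly).
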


\begin{proof}
	We first define $\w_*$ as $\w_*=\mathop{\arg\min}\limits_{\w}\sum_t \L_t(\w).$
	
	Based on the regularizer $\frac{\lambda}{2}\|\w\|^2$, it is easy to obtain $\|\w_*\|^2 \leq 1/{\lambda}$ due to the strong convexity property, and it is also reasonable to restrict $\w_t$ with $\|\w_t\|^2 \leq 1/{\lambda}$. Denote the projection of a point $\w$ onto $\|\u\|_2 \le\frac{1}{\sqrt{\lambda}}$ according to norm $\|\cdot\|_{H_t}$ by $\Pi^{H_t}_{\frac{1}{\sqrt{\lambda}}}(\w)=\arg\min_{\|\u\|\le\frac{1}{\sqrt{\lambda}}}\|\u-\w\|_{H_t}$,  the AdaOAM actually employs the following update rule:
	\begin{eqnarray}
	\w_{t+1}=\Pi^{H_t}_{\frac{1}{\sqrt{\lambda}}}(\w_t-\eta H_t^{-1}\g_t),  \label{eqn:update}
	\end{eqnarray}
	where $H_t=\delta I + diag(s_t)$ and  $\delta \geq 0$.
	
	If we denote $\psi_t(\g_t)=\langle \g_t,H_t \g_t\rangle$, and the dual norm of $\|\cdot\|_{\psi_t}$ by $\|\cdot\|_{\psi_t^*}$, in which case $\|\g_t\|_{\psi_t^*}=\|\g_t \|_{H_t^{-1}}$, then it is easy to check the update rule~\ref{eqn:update} is the same with the following composite mirror descent method:
	\begin{eqnarray}
	\w_{t+1}=\mathop{\arg\min}\limits_{\|\w\|\le\frac{1}{\sqrt{\lambda}}}\{\eta\langle \g_t,\w\rangle + \eta\varphi(\w) + B_{\psi_t}(\w,\w_t)\}, \label{eqn:4}
	\end{eqnarray}
	where the regularization function $\varphi\equiv 0$, and $B_{\psi_t}(\w,\w_t)$ is the Bregman divergence associated with a strongly convex and differentiable function $\psi_t$
	\begin{eqnarray*}
		B_{\psi_t}(\w,\w_t)=\psi_t(\w)-\psi_t(\w_t)-\langle \nabla \psi_t(\w_t),\w-\w_t\rangle .
	\end{eqnarray*}
	
	Since we have $\varphi\equiv 0$ in the case of regular data, the regret bound $R(T)=\sum\limits_{t=1}^T [\L_t(\w_t)-\L_t(\w_*)]$. Then, we follow the derivation results of~\cite{DBLP:journals/jmlr/DuchiHS11} and attain the following regret bound
	\begin{eqnarray*}
		\sum\limits_{t=1}^T [\L_t(\w_t)-\L_t(\w_*)] \leq \sqrt{2}D_{\infty}\sum\limits_{i=1}^d \|g_{1:T,i}\|_2 ,
	\end{eqnarray*}
	where $\chi=\{\w|\|\w\|\le\frac{1}{\sqrt{\lambda}}\}$ is bounded via $sup_{\w,\u\in \chi}\|\w-\u\|_{\infty} \leq D_{\infty}$. Next, we would like to analyze the features' dependency on the data of the gradient. Since
	\begin{eqnarray*}
		&&\hspace{-0.3in}(g_{t,i})^2 \leq \Big[\lambda \w_{t,i} + \frac{\sum\limits_{j=1}^{t-1}(1-y_t\langle\x_t-\x_j,\w\rangle)y_t(\x_{j,i}-\x_{t,i})}{T_t^-}\Big]^2 \nonumber \\
		&&\hspace{-0.3in}\leq 2(\lambda\w_{t,i})^2 + 2C(\x_{j,i}-\x_{t,i})^2=2(\lambda\w_{t,i})^2+ 2C(r_{t,i})^2,
	\end{eqnarray*}
	where $C\le (1+\frac{2}{\sqrt{\lambda}})^2$ is a constant to bound the scalar of the second term in the right side of the equation, and with $r_{t,i}=\max_{j<t}{|\x_{j,i}-\x_{t,i}|}$, we have
	\begin{small}
		\begin{eqnarray*}
			\sum\limits_{i=1}^d \|g_{1:T,i}\|_2& =& \sum^d_{i=1} \sqrt{\sum\limits_{t=1}^T (g_{t,i})^2} \\
			&\leq& \sqrt{2}\sum\limits_{i=1}^d \sqrt{\sum\limits_{t=1}^T[(\lambda\w_{t,i})^2+ C(r_{t,i})^2]}.
		\end{eqnarray*}
	\end{small}

	Finally, combining the above inequalities, we arrive at
	\begin{small}
		\begin{eqnarray*}
			\sum\limits_{t=1}^T[ \L_t(\w_t)-\L_t(\w_*)] \leq 2D_{\infty}\sum\limits_{i=1}^d \sqrt{\sum\limits_{t=1}^T[(\lambda\w_{t,i})^2+ C(r_{t,i})^2]}.
		\end{eqnarray*}
	\end{small}
\end{proof}
From the proof above, we can conclude that Algorithm~\ref{alg:AdaOAM} should have a lower regret than non-adaptive algorithms due to its dependence on the geometry of the underlying data space. If the features are normalized and sparse, the gradient terms in the bound $\sum\limits_{i=1}^d \|g_{1:T,i}\|_2$ should be much smaller than $\sqrt{T}$, which leads to lower regret and faster convergence. If the feature space is relative dense, then the convergence rate will be $O(1/\sqrt{T})$ for the general case as in OPAUC and OAM methods.

\subsection{Regret Bounds with High-dimensional Sparse Data}
\begin{thm}\label{thm:2}
	Assume $\|\w_t\| \le 1/\sqrt{\lambda},\quad (\forall t\in[T])$ and the diameter of $\chi=\{\w|\|\w\|\le\frac{1}{\sqrt{\lambda}}\}$ is bounded via $sup_{\w,\u\in \chi}\|\w-\u\|_{\infty} \leq D_{\infty}$, the regret bound with respect to $\ell_1$ regularization term is
	\begin{eqnarray*}\vspace{-0.1in}
		\sum\limits_{t=1}^T[ \L_t(\w_t)+\theta\|\w_t\|_1-\L_t(\w_*)-\theta\|\w_*\|_1] \\ \leq 2D_{\infty}\sum\limits_{i=1}^d \sqrt{\sum\limits_{t=1}^T[(\lambda\w_{t,i})^2+ C(r_{t,i})^2]},\vspace{-0.1in}
	\end{eqnarray*}
	where $\w_*=\arg\min_\w\sum^T_{t=1}[\L_t(\w)+\theta\|\w\|_1]$, $C\le(1+\frac{2}{\sqrt{\lambda}})^2$, and $r_{t,i}=\max_{j<t}|\x_{j,i}-\x_{t,i}|$.\vspace{-0.1in}
\end{thm}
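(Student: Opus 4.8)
The plan is to recognize that Theorem~\ref{thm:2} is the $\ell_1$-regularized counterpart of Theorem~\ref{thm:1}, so the whole argument can be routed through the same composite mirror descent machinery, with the single change that the regularizer $\varphi(\w)=\theta\|\w\|_1$ is now genuinely nonzero rather than identically $0$. First I would observe that the soft-thresholding update derived just above Algorithm~\ref{alg:AdaOAMs} is precisely the composite mirror descent update of Equation~\eqref{eqn:4} instantiated with this $\varphi$, with the adaptive proximal function $\psi_t(\cdot)=\langle\cdot,H_t\,\cdot\rangle$ where $H_t=\delta I+\mathrm{diag}(s_t)$, and with the feasible set $\chi=\{\w:\|\w\|\le 1/\sqrt{\lambda}\}$. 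Thus the iterates $\w_t$ appearing in the theorem are exactly those produced by the update~\eqref{eqn:4}, and Lemma~\ref{lemma:2} is directly applicable to them.

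Next I would invoke the composite form of the regret bound of~\cite{DBLP:journals/jmlr/DuchiHS11} (the general version of Lemma~\ref{lemma:2}, i.e.\ their Corollary~1 stated for a nonzero $\varphi$). The crucial point is that when $\varphi\not\equiv 0$ the regularizer terms $\varphi(\w_t)$ and $\varphi(\w_*)$ are carried on the \emph{left-hand} side of the inequality, so that with $\eta=D_{\infty}/\sqrt{2}$ one obtains $\sum_{t=1}^T[\L_t(\w_t)+\theta\|\w_t\|_1-\L_t(\w_*)-\theta\|\w_*\|_1]\le \sqrt{2}D_{\infty}\sum_{i=1}^d\|g_{1:T,i}\|_2$, while the right-hand side is \emph{identical} to the one in Theorem~\ref{thm:1}. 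I would stress why the right-hand side does not change: in composite (proximal) mirror descent the regularizer $\theta\|\w\|_1$ is kept exactly and is never linearized, so the gradient sequence $g_{1:T}$ still consists solely of the gradients $\g_t=\nabla\L_t(\w_t)$ of the smooth loss, exactly as in the regular-data analysis. Here $\w_*$ must be taken as the composite minimizer $\w_*=\arg\min_\w\sum_{t=1}^T[\L_t(\w)+\theta\|\w\|_1]$, matching the statement.

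Finally I would reuse, verbatim, the per-coordinate gradient estimate established inside the proof of Theorem~\ref{thm:1}, namely $(g_{t,i})^2\le 2(\lambda\w_{t,i})^2+2C(r_{t,i})^2$ with $C\le(1+2/\sqrt{\lambda})^2$ and $r_{t,i}=\max_{j<t}|\x_{j,i}-\x_{t,i}|$; this step transfers unchanged precisely because $\g_t$ is the same smooth-loss gradient in both settings. Summing over $t$ and taking square roots coordinatewise gives $\sum_{i=1}^d\|g_{1:T,i}\|_2\le\sqrt{2}\sum_{i=1}^d\sqrt{\sum_{t=1}^T[(\lambda\w_{t,i})^2+C(r_{t,i})^2]}$, and combining with the composite regret bound of the previous paragraph yields the claimed inequality, the $\sqrt{2}$ factors multiplying to the stated constant $2$.

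The main obstacle, and the only place genuine care is needed, is justifying that the regret bound of~\cite{DBLP:journals/jmlr/DuchiHS11} indeed applies in its composite form to the nonsmooth $\ell_1$ term together with the data-dependent, time-varying Bregman divergence $B_{\psi_t}$ and the hard projection onto $\chi$. Concretely, I must confirm that the strong convexity of $\psi_t$ with respect to $\|\cdot\|_{H_t}$, the convexity of $\varphi=\theta\|\cdot\|_1$, and the choice $\eta=D_{\infty}/\sqrt{2}$ together reproduce the DHS hypotheses, so that the regularizer cancels telescopically on the left and leaves the same $\sum_i\|g_{1:T,i}\|_2$ envelope on the right; once this invocation is secured, the remaining algebra is exactly that of Theorem~\ref{thm:1}.
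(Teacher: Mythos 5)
Your proposal is correct and follows essentially the same route as the paper: identify the soft-thresholding update as the composite mirror descent step with $\varphi(\w)=\theta\|\w\|_1$, invoke the composite regret bound of Duchi et al.\ with $\eta=D_{\infty}/\sqrt{2}$ to get $\sqrt{2}D_{\infty}\sum_{i=1}^d\|g_{1:T,i}\|_2$ on the right with the regularizer terms absorbed into the left-hand side, and then reuse the per-coordinate gradient bound from Theorem~\ref{thm:1} to reach the stated constant $2D_{\infty}$. If anything, your write-up is slightly more complete than the paper's, which stops at the $\sqrt{2}D_{\infty}\sum_{i=1}^d\|g_{1:T,i}\|_2$ bound and leaves the final substitution of the gradient estimate implicit.
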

\begin{proof}
	
	In the case of high-dimensional sparse adaptive online AUC maximization, the regret we plan to bound with respect to the optimal weight $\w_*$ is formulated as
	\begin{small}
		\begin{eqnarray*}
			R(T)=\sum\limits_{t=1}^T[ \L_t(\w_t)+\varphi(\w_t)-\L_t(\w_*)-\varphi(\w_*)], \vspace{-0.1in}
		\end{eqnarray*}
	\end{small}
	where $\varphi(\w)=\theta\|\w\|_1$ is the $\ell_1$ regularization term to impose sparsity to the solution. Similarly, if denote $\psi_t(\g_t)=\langle \g_t,H_t \g_t\rangle$, and the dual norm of $\|\cdot\|_{\psi_t}$ by $\|\cdot\|_{\psi_t^*}$, in which case $\|\g_t\|_{\psi_t^*}=\|\g_t \|_{H_t^{-1}}$,  it is easy to check the updating rule of SAdaOAM
	\begin{small}
		\begin{eqnarray*}
			\w_{t+1,i}=sign(\w_{t,i}-\frac{\eta}{H_{t,ii}}\g_{t,i})[|\w_{t,i}-\frac{\eta}{H_{t,ii}}\g_{t,i}|-\frac{\eta\theta}{H_{t,ii}}]_+,
		\end{eqnarray*}
	\end{small}
	is the same with the following one
	\begin{eqnarray}
	\w_{t+1}=\mathop{\arg\min}\limits_{\|\w\|\le\frac{1}{\sqrt{\lambda}}}\{\eta\langle \g_t,\w\rangle + \eta\varphi(\w) + B_{\psi_t}(\w,\w_t)\}, \label{eqn:4}
	\end{eqnarray}
	where $\varphi(\w)=\theta||\w||_1$. From~\cite{DBLP:journals/jmlr/DuchiHS11}, we have
	\begin{eqnarray*}
		R(T) \leq \frac{1}{2\eta}\max\limits_{t\leq T}\|\w_*-\w_t\|_{\infty}^2\sum\limits_{i=1}^d \|g_{1:T,i}\|_2+\eta\sum\limits_{i=1}^d \|g_{1:T,i}\|_2 .   \label{eqn:corollary6}
	\end{eqnarray*}
	Furthermore, we assume $\sup_{\w,\u\in \chi}\|\w-\u\|_{\infty} \leq D_{\infty}$ and set $\eta=D/\sqrt{2}$, the final regret bound is
	\begin{eqnarray*}
		R(T) \leq \sqrt{2}D_{\infty}\sum\limits_{i=1}^d \|g_{1:T,i}\|_2 .
	\end{eqnarray*}
	This theoretical result shows that the regret bound for sparse solution is the same as that in the case when $\varphi\equiv 0$.
\end{proof}
As discussed above, the SAdaOAM algorithm should have lower regret bound than non-adaptive algorithms do on high-dimensional sparse data, though this depends on the geometric property of the underlying feature distribution. If some features appear much more frequently than others, then $\sum\limits_{i=1}^d \|g_{1:T,i}\|_2$ indicates that we could have remarkably lower regret by using higher learning rates for infrequent features and lower learning rates for often occurring features.

\section{Experimental Results}

In this section, we evaluate the proposed set of the AdaOAM algorithms in terms of AUC performance, convergence rate, and examine their parameter sensitivity. The main framework of the experiments is based on the LIBOL, an open-source library for online learning algorithms~\footnote{\url{http://libol.stevenhoi.org/}}~\cite{DBLP:journals/jmlr/HoiWZ14}.

\subsection{Comparison Algorithms}
We conduct comprehensive empirical studies by comparing the proposed algorithms with various AUC optimization algorithms for both online and batch scenarios. Specifically, the algorithms considered in our experiments include:

\begin{itemize}
	\item \textbf{Online Uni-Exp}: An online learning algorithm which optimizes the (weighted) univariate exponential loss~\cite{DBLP:conf/icml/KotlowskiDH11};\vspace{-0.05in}
	\item \textbf{Online Uni-Log}: An online learning algorithm which optimizes the (weighted) univariate logistic loss~\cite{DBLP:conf/icml/KotlowskiDH11};\vspace{-0.05in}
	\item \textbf{OAM$_{\textbf{seq}}$}: The OAM algorithm with reservoir sampling and sequential updating method~\cite{DBLP:conf/icml/ZhaoHJY11};\vspace{-0.05in}
	\item \textbf{OAM$_{\textbf{gra}}$}: The OAM algorithm with reservoir sampling and online gradient updating method~\cite{DBLP:conf/icml/ZhaoHJY11};\vspace{-0.05in}
	\item \textbf{OPAUC}: The one-pass AUC optimization algorithm with square loss function~\cite{DBLP:conf/icml/GaoJZZ13};\vspace{-0.05in}
	\item \textbf{SVM-perf}: A batch algorithm which directly optimizes AUC~\cite{DBLP:conf/icml/Joachims05};\vspace{-0.05in}
	\item \textbf{CAPO}: A batch algorithm which trains nonlinear auxiliary classifiers first  and then adapts auxiliary classifiers for specific performance measures~\cite{DBLP:journals/pami/LiTZ13};\vspace{-0.05in}
	\item \textbf{Batch Uni-Log}: A batch algorithm which optimizes the (weighted) univariate logistic loss~\cite{DBLP:conf/icml/KotlowskiDH11};\vspace{-0.05in}
	\item \textbf{Batch Uni-Squ}: A batch algorithm which optimizes the (weighted) univariate square loss;\vspace{-0.05in}
	\item \textbf{AdaOAM}: The proposed adaptive gradient method for online AUC maximization.\vspace{-0.05in}
	\item \textbf{SAdaOAM}: The proposed sparse adaptive subgradient method for online AUC maximization.\vspace{-0.05in}
\end{itemize}

It is noted that the OAM$_{seq}$, OAM$_{gra}$, and OPAUC are the state-of-the-art methods for AUC maximization in online settings. For batch learning scenarios, CAPO and SVM-perf are both strong baselines to compare against.

\subsection{Experimental Testbed and Setup}

To examine the performance of the proposed AdaOAM in comparison to the existing state-of-the-art methods, we conduct extensive experiments on sixteen benchmark datasets by maintaining consistency to the previous studies on online AUC maximization~\cite{DBLP:conf/icml/ZhaoHJY11,DBLP:conf/icml/GaoJZZ13}. Table~\ref{table:datasets} shows the details of 16 binary-class datasets in our experiments.  All of these datasets can be downloaded from the LIBSVM~\footnote{\url{http://www.csie.ntu.edu.tw/~cjlin/libsvmtools/}} and UCI machine learning repository~\footnote{\url{http://www.ics.uci.edu/~mlearn/MLRepository.html}}. Note that several datasets (svmguide4, vehicle) are originally multi-class, which were converted to class-imbalanced binary datasets for the purpose of in our experimental studies.
\begin{table*}[htpb]
	\vspace{-0.2in}
	\renewcommand*\arraystretch{1.0}
	\begin{center}
		\caption{\small Details of benchmark machine learning datasets.}\label{table:datasets}
		\vspace{-0.1in}
		\begin{small}
			\begin{tabular}{|c|ccc|c|ccc|}        \hline
				datasets &   $\#$ inst   & $\#$ dim &$T_{-} / T_{+}$ & datasets & $\#$ inst    &$\#$ dim &$T_{-} / T_{+}$\\
				\hline\hline
				glass           &214     &9   &2.057   &vehicle         &846     &18  & 3.251\\
				heart           &270     &13  &1.250   &german          &1,000   &24  & 2.333\\
				svmguide4       &300     &10  &5.818   &svmguide3       &1,243   &22  & 3.199\\
				liver-disorders &345     &6   &1.379   &a2a             &2,265   &123 & 2.959\\
				balance         &625     &4   &11.755  &magic04         &19,020  &10  & 1.843\\
				breast          &683     &10  &1.857   &cod-rna         &59,535  &8   & 2.000\\
				australian      &690     &14  &1.247   &acoustic        &78,823  &50  & 3.316\\
				diabetes        &768     &8   &1.865   &poker           &1025,010&11  & 10.000\\
				\hline
			\end{tabular}
		\end{small}
	\end{center}
	\renewcommand*\arraystretch{1.0}
	\vspace{-0.1in}
\end{table*}

In the experiments, the features have been normalized fairly, i.e., $\x_t\leftarrow \x_t/\|\x_t\|$, which is reasonable since instances are received sequentially in online learning setting. Each dataset has been randomly divided into 5 folds, in which 4 folds are for training and the remaining fold is for testing. We also generate 4 independent 5-fold partitions per dataset to further reduce the effects of random partition on the algorithms. Therefore, the reported AUC values are the average results of 20 runs for each dataset. 5-fold cross validation is conducted on the training sets to decide on the learning rate $\eta\in2^{[-10:10]}$ and the regularization parameter $\lambda\in2^{[-10:6]}$. For OAM$_{gra}$ and OAM$_{seq}$, the buffer size is fixed at 100 as suggested in~\cite{DBLP:conf/icml/ZhaoHJY11}. All experiments for online setting comparisons were conducted with MATLAB on a computer workstation with 16GB memory and 3.20GHz CPU. On the other hand, for fair comparisons in batch settings, the core steps of the algorithms were implemented in C++ since we directly use the respective toolboxes ~\footnote{\url{http://www.cs.cornell.edu/people/tj/svm_light/svm_perf.html}}~\footnote{\url{http://lamda.nju.edu.cn/code_CAPO.ashx}} provided by the respective authors of the SVM-perf and CAPO algorithms.

\subsection{Evaluation of AdaOAM on Benchmark Datasets}

Table~\ref{table:AUC-value1} summarizes the average AUC performance of the algorithms under studied over the 16 datasets for online setting. In this table, we use $\bullet/\circ$ to indicate that AdaOAM is significantly better/worse than the corresponding method (pairwise $t$-tests at 95\% significance level).
\begin{table*}[!htpb]
	\vspace{-0.1in}
	\renewcommand*\arraystretch{1.1}
	\begin{center}
		\caption{AUC performance evaluation (mean$\pm$std.) of AdaOAM against other online algorithms on benchmark datasets. $\bullet/\circ$ indicates that AdaOAM is significantly better/worse than the corresponding method (pairwise $t$-tests at 95\% significance level).}\label{table:AUC-value1}
		\vspace{-0.1in}
		\begin{scriptsize}
			\begin{tabular}{|c|c|c|c|c|c|c|c|c|c|}        \hline
				datasets       & AdaOAM  & OPAUC & OAM$_{seq}$ & OAM$_{gra}$ & online Uni-Log & online Uni-Exp  \\
				\hline
				glass          &.816 	$\pm$ .058 &.804 	$\pm$ .059$\bullet$&.808 	$\pm$ .086$\bullet$&.817 	$\pm$ .057$\circ$  &.797 	$\pm$ .074$\bullet$&.795 	$\pm$ .074$\bullet$ \\
				heart          &.912 	$\pm$ .029 &.910 	$\pm$ .029$\bullet$&.907 	$\pm$ .027$\bullet$&.892 	$\pm$ .030$\bullet$&.906 	$\pm$ .030$\bullet$&.908 	$\pm$ .029$\bullet$ \\
				svmguide4      &.819 	$\pm$ .089 &.740 	$\pm$ .077$\bullet$&.782 	$\pm$ .051$\bullet$&.781 	$\pm$ .069$\bullet$&.588 	$\pm$ .106$\bullet$&.583 	$\pm$ .101$\bullet$ \\
				liver-disorders&.719 	$\pm$ .034 &.711 	$\pm$ .036$\bullet$&.704 	$\pm$ .047$\bullet$&.693 	$\pm$ .060$\bullet$&.695 	$\pm$ .042$\bullet$&.698 	$\pm$ .040$\bullet$ \\
				balance        &.579 	$\pm$ .106 &.551 	$\pm$ .114$\bullet$&.524 	$\pm$ .098$\bullet$&.508 	$\pm$ .078$\bullet$&.385 	$\pm$ .091$\bullet$&.385 	$\pm$ .090$\bullet$ \\
				breast         &.992 	$\pm$ .005 &.992 	$\pm$ .006         &.989 	$\pm$ .008         &.992 	$\pm$ .006         &.992 	$\pm$ .006         &.992 	$\pm$ .006          \\
				australian     &.927 	$\pm$ .016 &.926 	$\pm$ .016         &.915 	$\pm$ .024$\bullet$&.911 	$\pm$ .021$\bullet$&.924 	$\pm$ .016         &.923 	$\pm$ .017          \\
				diabetes       &.826 	$\pm$ .031 &.825 	$\pm$ .031         &.820 	$\pm$ .030$\bullet$&.808 	$\pm$ .046$\bullet$&.824 	$\pm$ .032         &.824 	$\pm$ .032          \\
				vehicle        &.818 	$\pm$ .026 &.816 	$\pm$ .025         &.815 	$\pm$ .027         &.792 	$\pm$ .035$\bullet$&.770 	$\pm$ .031$\bullet$&.774 	$\pm$ .031$\bullet$ \\
				german         &.771 	$\pm$ .031 &.730 	$\pm$ .052$\bullet$&.751 	$\pm$ .044$\bullet$&.730 	$\pm$ .033$\bullet$&.662 	$\pm$ .037$\bullet$&.663 	$\pm$ .037$\bullet$ \\
				svmguide3      &.734 	$\pm$ .038 &.724 	$\pm$ .038$\bullet$&.719 	$\pm$ .041$\bullet$&.696 	$\pm$ .047$\bullet$&.674 	$\pm$ .041$\bullet$&.678 	$\pm$ .042$\bullet$ \\
				a2a            &.873 	$\pm$ .019 &.872 	$\pm$ .021$\bullet$&.840 	$\pm$ .022$\bullet$&.839 	$\pm$ .019$\bullet$&.862 	$\pm$ .019$\bullet$&.866 	$\pm$ .020$\bullet$ \\
				magic04        &.798 	$\pm$ .007 &.765 	$\pm$ .009$\bullet$&.777 	$\pm$ .012$\bullet$&.752 	$\pm$ .023$\bullet$&.754 	$\pm$ .008$\bullet$&.752 	$\pm$ .008$\bullet$ \\
				cod-rna        &.962 	$\pm$ .002 &.919 	$\pm$ .003$\bullet$&.942 	$\pm$ .005$\bullet$&.936 	$\pm$ .004$\bullet$&.919 	$\pm$ .003$\bullet$&.919 	$\pm$ .003$\bullet$ \\
				acoustic       &.894 	$\pm$ .002 &.888    $\pm$ .002$\bullet$&.882 	$\pm$ .048$\bullet$&.871 	$\pm$ .054$\bullet$&.768 	$\pm$ .069$\bullet$&.796 	$\pm$ .196$\bullet$ \\
				poker          &.521 	$\pm$ .007 &.520 	$\pm$ .007         &.507 	$\pm$ .008$\bullet$&.508 	$\pm$ .016$\bullet$&.488 	$\pm$ .006$\bullet$&.488 	$\pm$ .006$\bullet$ \\\hline
				\multicolumn{2}{|c|}{win/tie/loss}       &11/5/0                     &14/2/0                     &14/1/1                     &13/3/0                     &13/3/0                 \\
				\hline
			\end{tabular}
		\end{scriptsize}
	\end{center}
	\renewcommand*\arraystretch{0.9}
	\vspace{-0.1in}
\end{table*}

From the results in Table~\ref{table:AUC-value1}, several interesting observations can be drawn. Firstly, the win/tie/loss counts show that the AdaOAM is clearly superior to the counterpart algorithms considered for comparison, as it wins in most cases and has zero loses in terms of AUC performance. This indicates that the proposed AdaOAM is the most effective online AUC optimization algorithm among all others considered. Secondly, AdaOAM outperforms the first-order online AUC maximization algorithms, including, OPAUC, OAM$_{seq}$, and OAM$_{gra}$, thus demonstrating that second-order information can help significantly improve the learning efficacy of existing online AUC optimization algorithms. In addition, on svmguide4, balance scale, and poker hand datasets, the optimization methods based on pairwise loss functions including AdaOAM, OPAUC, OAM$_{seq}$, and OAM$_{gra}$ perform far better than those methods based on univariate loss functions including Uni-Log and Uni-Exp. This highlights the significance and effectiveness of methods based on pairwise loss function optimization over univariate loss function ones.

To study the efficiency of the proposed AdaOAM algorithm, Figure~\ref{fig/time} depicts the running time (in milliseconds) of AdaOAM versus other online learning algorithms on all the 16 benchmark datasets.

\begin{figure*}[htbp]\vspace{-0.1in}
	\centering\includegraphics[width=7in]{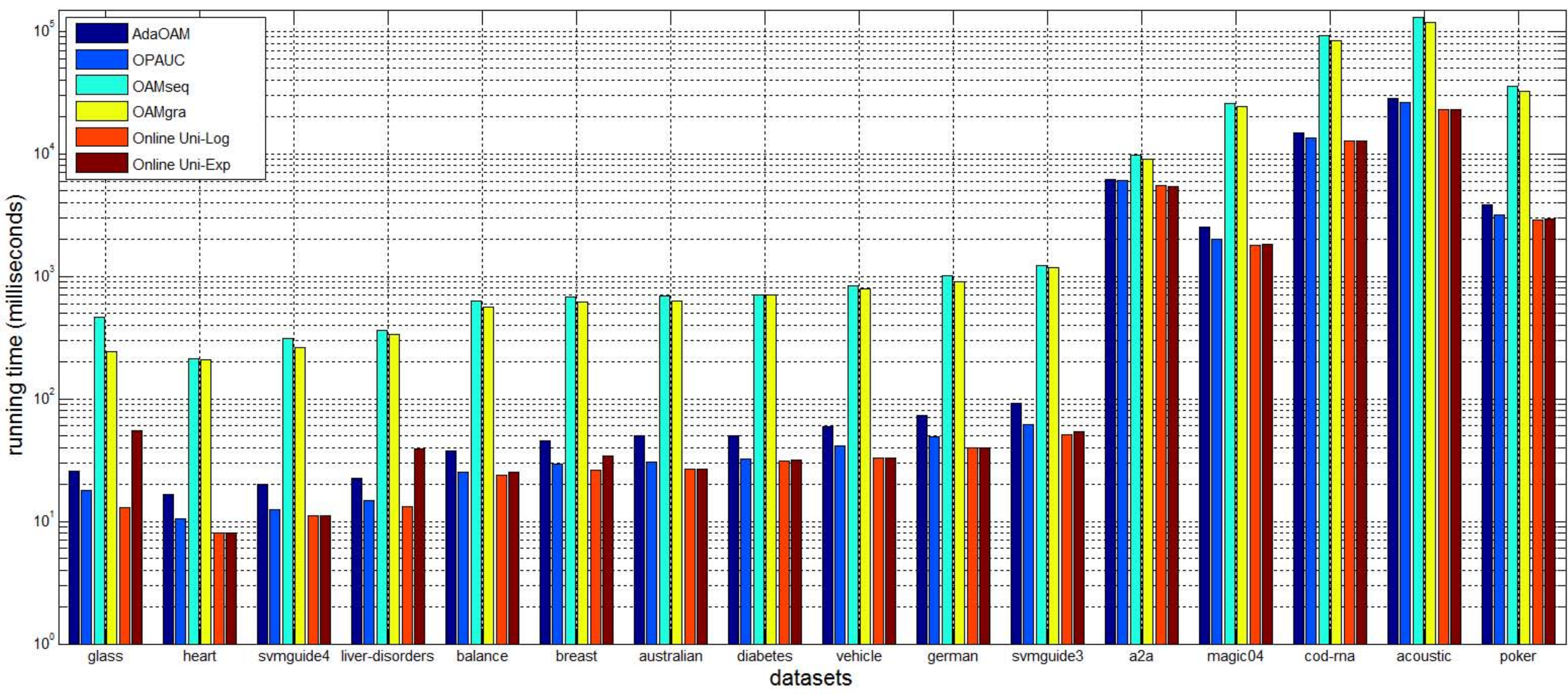}\vspace{-0.2in}
	\caption{Running time (in milliseconds) of AdaOAM and the other online learning algorithms on all 16 benchmark datasets. Note that the \emph{y}-axis is presented in log-scale.}\label{fig/time}\vspace{-0.1in}
\end{figure*}

From the results in Figure~\ref{fig/time}, it can be observed that the empirical computational complexity of AdaOAM is in general comparable to the other online learning algorithms, while being more efficient than OAM$_{seq}$ and OAM$_{gra}$ on some of the datasets, such as, glass, heart, etc. This indicates that the proposed algorithm is scalable and efficient, making it more attractive to large-scale real-world applications.

Next we move from the online setting to a batch learning mode. In particular, Table~\ref{table:AUC-value2} summarizes the average AUC performance of the algorithms under comparison over the 16 datasets in a batch setting. Note that here $\bullet/\circ$ is used to indicate that AdaOAM is significantly better/worse than the corresponding method.
\begin{table*}[!htpb]
	\vspace{-0.1in}
	\renewcommand*\arraystretch{1.1}
	\begin{center}
		\caption{AUC performance evaluation (mean$\pm$std.) of AdaOAM against other batch algorithms on benchmark datasets. $\bullet/\circ$ indicates that AdaOAM is significantly better/worse than the corresponding method (pairwise $t$-tests at 95\% significance level).}\label{table:AUC-value2}
		\vspace{-0.1in}
		\begin{scriptsize}
			\begin{tabular}{|c|c|c|c|c|c|c|c|}        \hline
				datasets       & AdaOAM  & SVM-perf & CAPO & batch Uni-Log & batch Uni-Squ  \\
				\hline
				glass          &.816 	$\pm$ .058 &.822	$\pm$ .060$\circ$  &.839	$\pm$ .057$\circ$  &.807 	$\pm$ .071$\bullet$&.819 	$\pm$ .060          \\
				heart          &.912 	$\pm$ .029 &.921	$\pm$ .016$\circ$  &.923	$\pm$ .013$\circ$  &.900 	$\pm$ .034$\bullet$&.902 	$\pm$ .034$\bullet$ \\
				svmguide4      &.819 	$\pm$ .089 &.871	$\pm$ .048$\circ$  &.841	$\pm$ .022$\circ$  &.596 	$\pm$ .100$\bullet$&.695 	$\pm$ .091$\bullet$ \\
				liver-disorders&.719 	$\pm$ .034 &.722	$\pm$ .084         &.729	$\pm$ .045$\circ$  &.703 	$\pm$ .052$\bullet$&.693 	$\pm$ .057$\bullet$ \\
				balance        &.579 	$\pm$ .106 &.592	$\pm$ .025$\circ$  &.620	$\pm$ .010$\circ$  &.355 	$\pm$ .081$\bullet$&.517 	$\pm$ .206$\bullet$ \\
				breast         &.992 	$\pm$ .005 &.998	$\pm$ .069         &.995	$\pm$ .044         &.995 	$\pm$ .004         &.994 	$\pm$ .005          \\
				australian     &.927 	$\pm$ .016 &.939	$\pm$ .052$\circ$  &.915	$\pm$ .030$\bullet$&.924 	$\pm$ .016         &.925 	$\pm$ .016          \\
				diabetes       &.826 	$\pm$ .031 &.836	$\pm$ .020$\circ$  &.852	$\pm$ .024$\circ$  &.828 	$\pm$ .031         &.828 	$\pm$ .032          \\
				vehicle        &.818 	$\pm$ .026 &.820	$\pm$ .034         &.782	$\pm$ .029$\bullet$&.823 	$\pm$ .029$\circ$  &.798 	$\pm$ .029$\bullet$ \\
				german         &.771 	$\pm$ .031 &.777	$\pm$ .043         &.783	$\pm$ .033$\circ$  &.766 	$\pm$ .035$\bullet$&.734 	$\pm$ .034$\bullet$ \\
				svmguide3      &.734 	$\pm$ .038 &.753	$\pm$ .056$\circ$  &.752	$\pm$ .055$\circ$  &.743 	$\pm$ .032$\circ$  &.729 	$\pm$ .037$\bullet$ \\
				a2a            &.873 	$\pm$ .019 &.881	$\pm$ .012$\circ$  &.865	$\pm$ .049$\bullet$&.878 	$\pm$ .017$\circ$  &.873 	$\pm$ .019          \\
				magic04        &.798 	$\pm$ .007 &.792	$\pm$ .031$\bullet$&.780	$\pm$ .045$\bullet$&.758 	$\pm$ .008$\bullet$&.757 	$\pm$ .008$\bullet$ \\
				cod-rna        &.962 	$\pm$ .002 &.954	$\pm$ .027$\bullet$&.986	$\pm$ .073$\circ$  &.949 	$\pm$ .003$\bullet$&.924 	$\pm$ .003$\bullet$ \\
				acoustic       &.894 	$\pm$ .002 &.897	$\pm$ .038         &.892	$\pm$ .075         &.878 	$\pm$ .007$\bullet$&.865 	$\pm$ .006$\bullet$ \\
				poker          &.521 	$\pm$ .007 &.524	$\pm$ .070         &.517	$\pm$ .010$\bullet$&.497 	$\pm$ .024$\bullet$&.496 	$\pm$ .006$\bullet$ \\  \hline
				\multicolumn{2}{|c|}{win/tie/loss}       &2/6/8                     &5/2/9                     &10/3/3                     &11/5/0                            \\
				\hline
			\end{tabular}
		\end{scriptsize}
	\end{center}
	\renewcommand*\arraystretch{1.0}
	\vspace{-0.1in}
\end{table*}

From Table~\ref{table:AUC-value2}, the following observations have been observed. Firstly, the win/tie/loss counts show that the AdaOAM is superior to batch Uni-Log and batch Uni-Squ in many cases. Since batch Uni-Log and batch Uni-Squ operate on optimizing univariate loss functions, the results demonstrates the significance of adopting pair-wise loss function for AUC maximization. Secondly, the performance of AdaOAM is competitive to CAPO, but underperformed SVM-perf, which is expected since AdaOAM trades efficacy with efficiency.

To analyze the efficiency of the proposed AdaOAM algorithm, we summarize the running time (in milliseconds) of AdaOAM and the other batch learning algorithms on all the benchmark datasets in Figure~\ref{fig/time_batch}. Since the core steps of the SVM-perf and CAPO are implemented based on the specific toolboxes~\footnote{\url{http://www.cs.cornell.edu/people/tj/svm_light/svm_perf.html}}~\footnote{\url{http://lamda.nju.edu.cn/code_CAPO.ashx}} developed by the respective authors in C++, we have implemented the core steps of the AdaOAM, batch Uni-Log, and batch Uni-Squ in C++ in our work to obtain the time cost comparisons reported here.

\begin{figure*}[htbp]\vspace{-0.1in}
	\centering\includegraphics[width=7in]{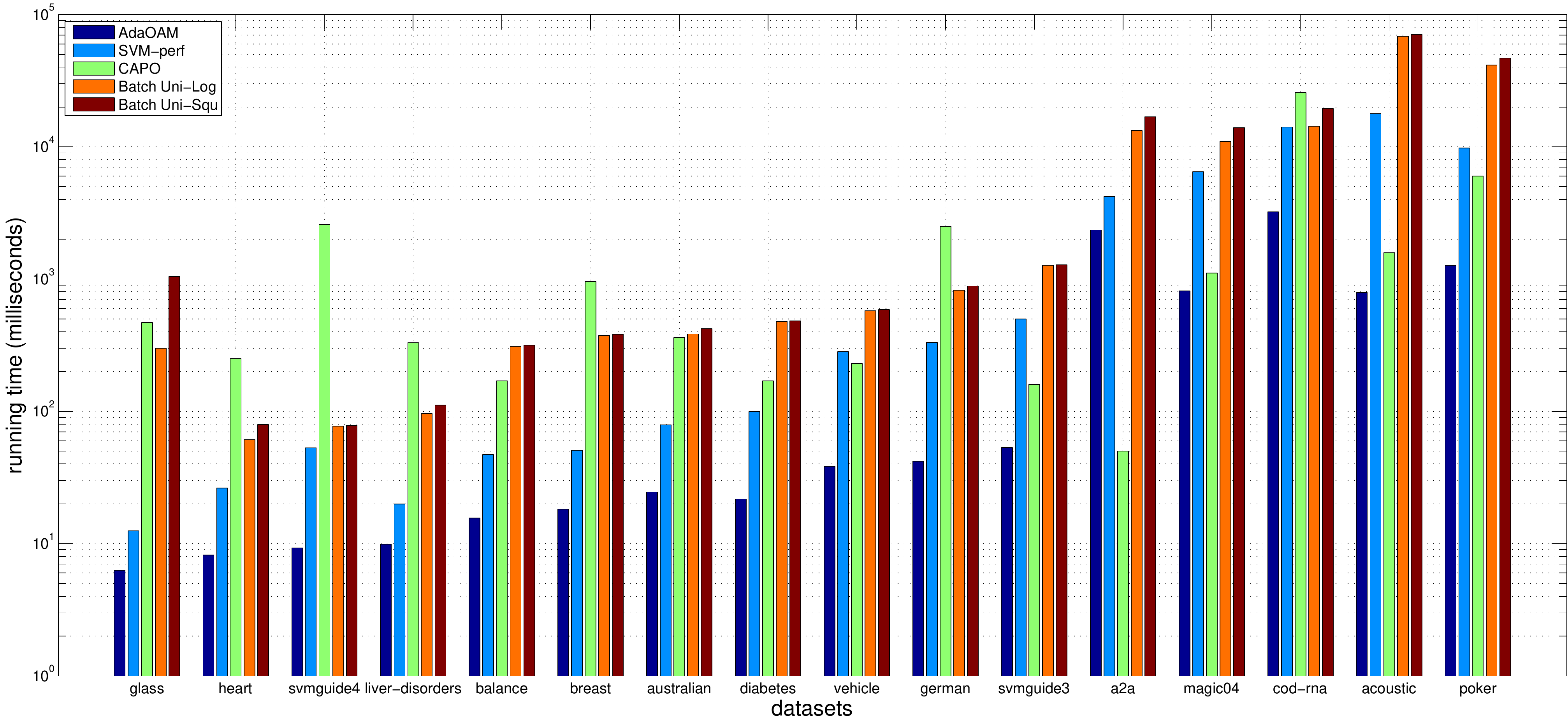}\vspace{-0.2in}
	\caption{Running time (in milliseconds) of AdaOAM and the other batch learning algorithms on all 16 benchmark datasets. Note that the \emph{y}-axis is presented in log-scale.}\label{fig/time_batch}\vspace{-0.1in}
\end{figure*}

Several core observations can be drawn from Figure~\ref{fig/time_batch}. First of all, the results on time costs highlights the higher efficiency of the AdaOAM against the other batch learning algorithms in general. Second, the empirical computational time cost of AdaOAM is noted to be significantly lower than the batch Uni-Log and batch Uni-Squ, which is attributed to the difference between an online setting and a batch setting. Third, CAPO is noted to be less efficient than SVM-perf in most cases. This is because CAPO being an ensemble learning method for AUC optimization is expected to incur higher training time. Besides, SVM-perf and CAPO need more time costs than that of the AdaOAM especially on large datasets although both of them are designed to be fast algorithms for performance measure optimization tasks. One exception from the results where the time cost of AdaOAM on the ``a2a" dataset is observed to be higher than that of the CAPO method. The reason is that ``a2a" dataset being a highly sparse dataset is a clear advantage for the CAPO since it operates under the framework of Core Vector Machine method~\cite{DBLP:journals/jmlr/TsangKC05}, which is a very fast batch algorithm for training SVM model involving sparse data. On the other hand, AdaOAM is not optimally designed to deal with sparse dataset and since it need to compute the covariance matrices when updating the model, and at the same time C++ programming is not appropriate and efficient to deal with matrix computations.

\subsection{Evaluation of Online Performance}
Next we study the online performance of AdaOAM versus the other online learning algorithms and highlight 6 datasets for illustration. Specifically, Figure~\ref{fig:convergence}(a)-(h) report the average AUC performance (across 20 independent runs) of the online model on the testing datasets. From the results, AdaOAM is once again shown to significantly outperform all the other three counterparts in the online learning process, which is consistent to our theoretical analysis that AdaOAM can more effectively exploit second order information to achieve improved regret bounds and robust performance.

\begin{figure*}[!htbp]
	\vspace{-0.1in}
	\centering
	\subfigure[heart]{
		\includegraphics[width=2.2in]{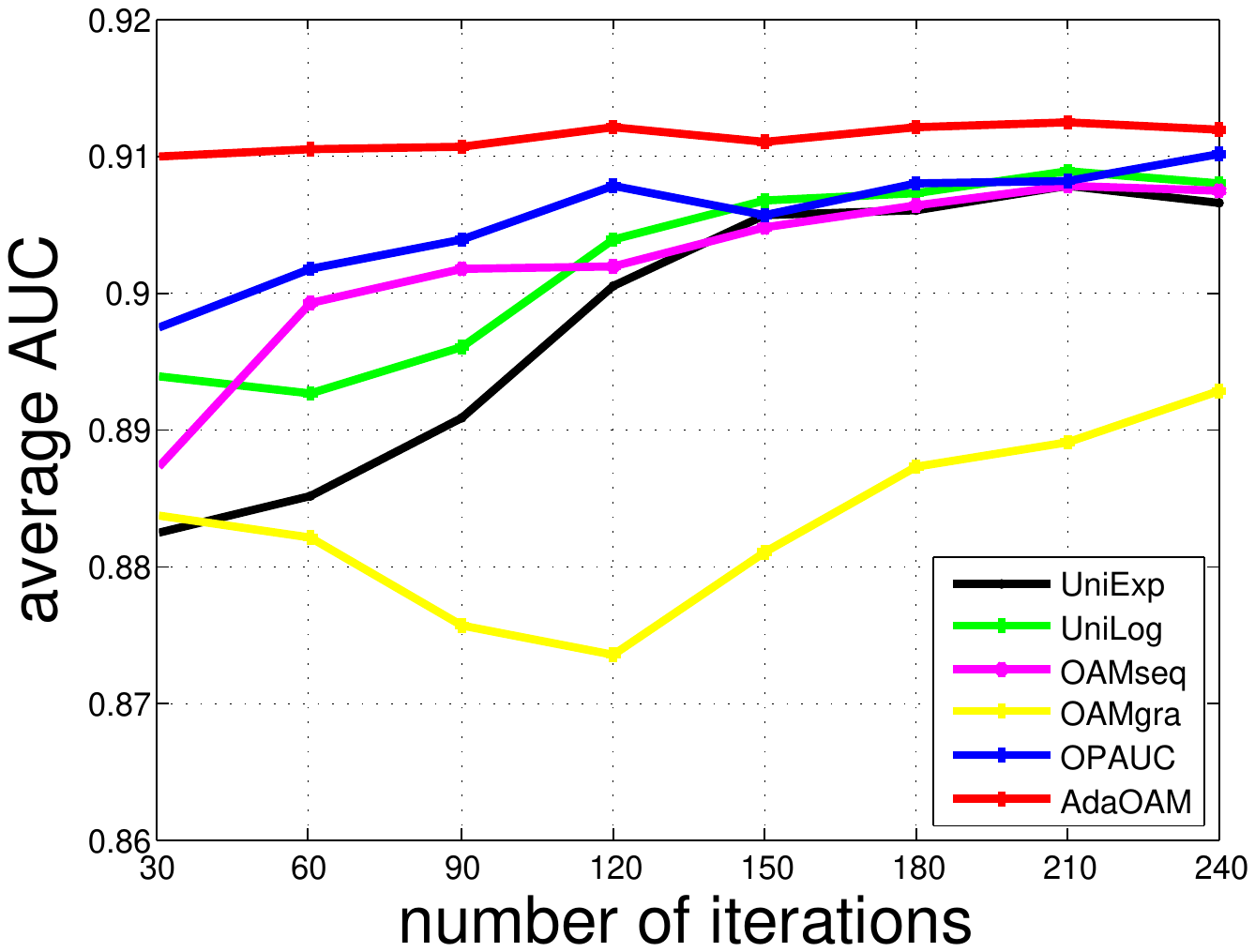}
	}\hspace{0.1in}
	\subfigure[svmguide4]{
		\includegraphics[width=2.2in]{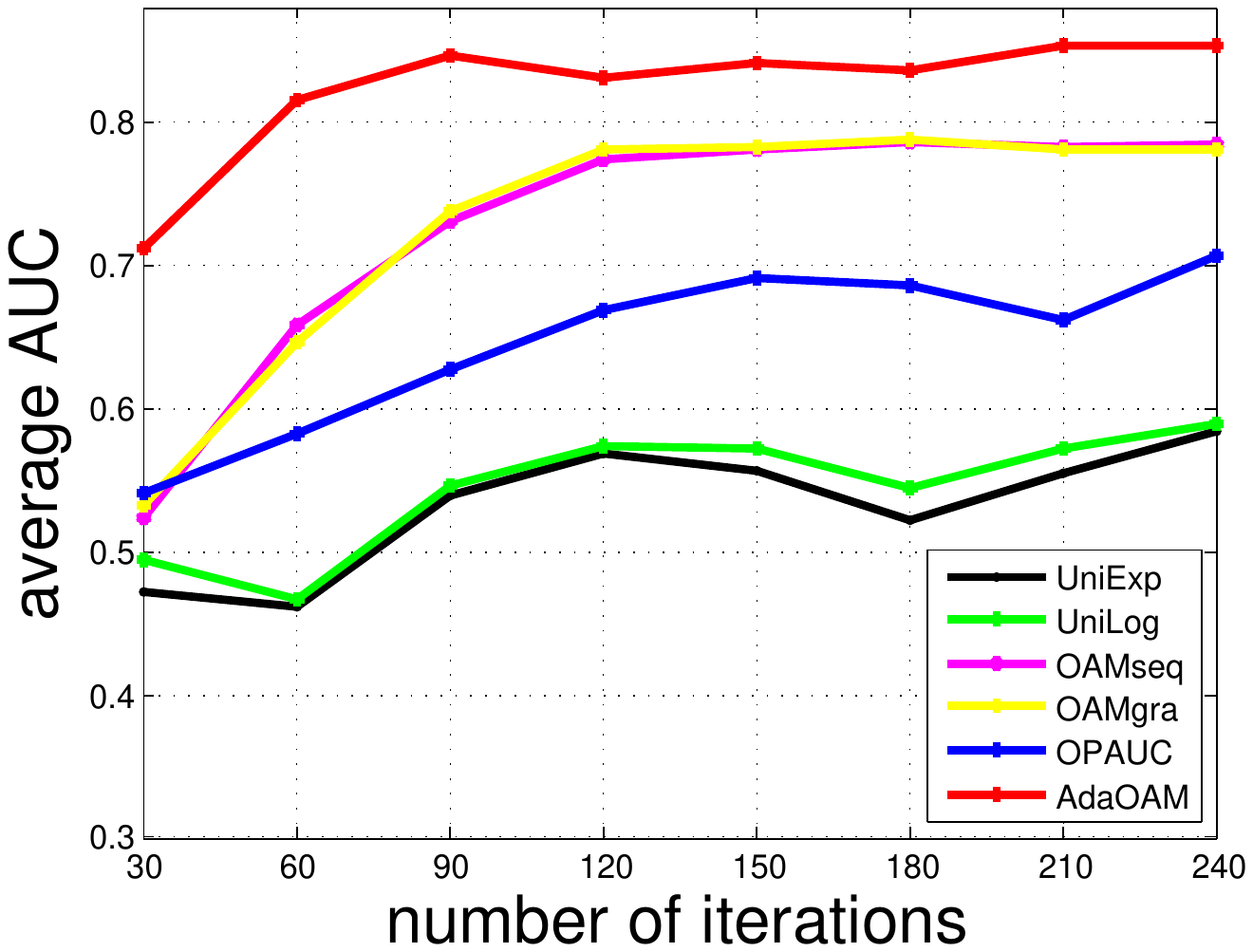}
	}\hspace{0.1in}
	\subfigure[liver-disorders]{
		\includegraphics[width=2.2in]{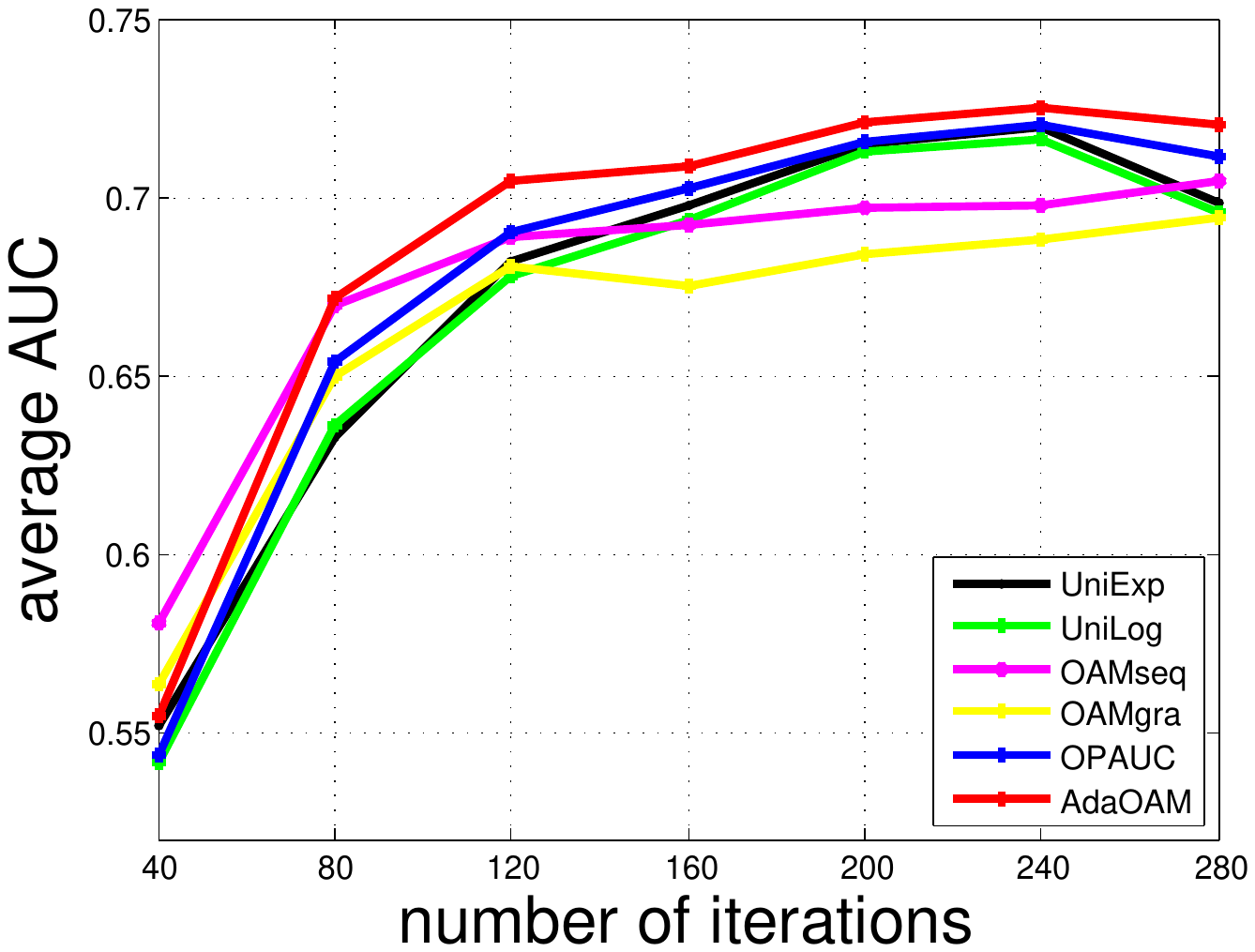}
	}\\ \vspace{-0.1in}
	\subfigure[australian]{
		\includegraphics[width=2.2in]{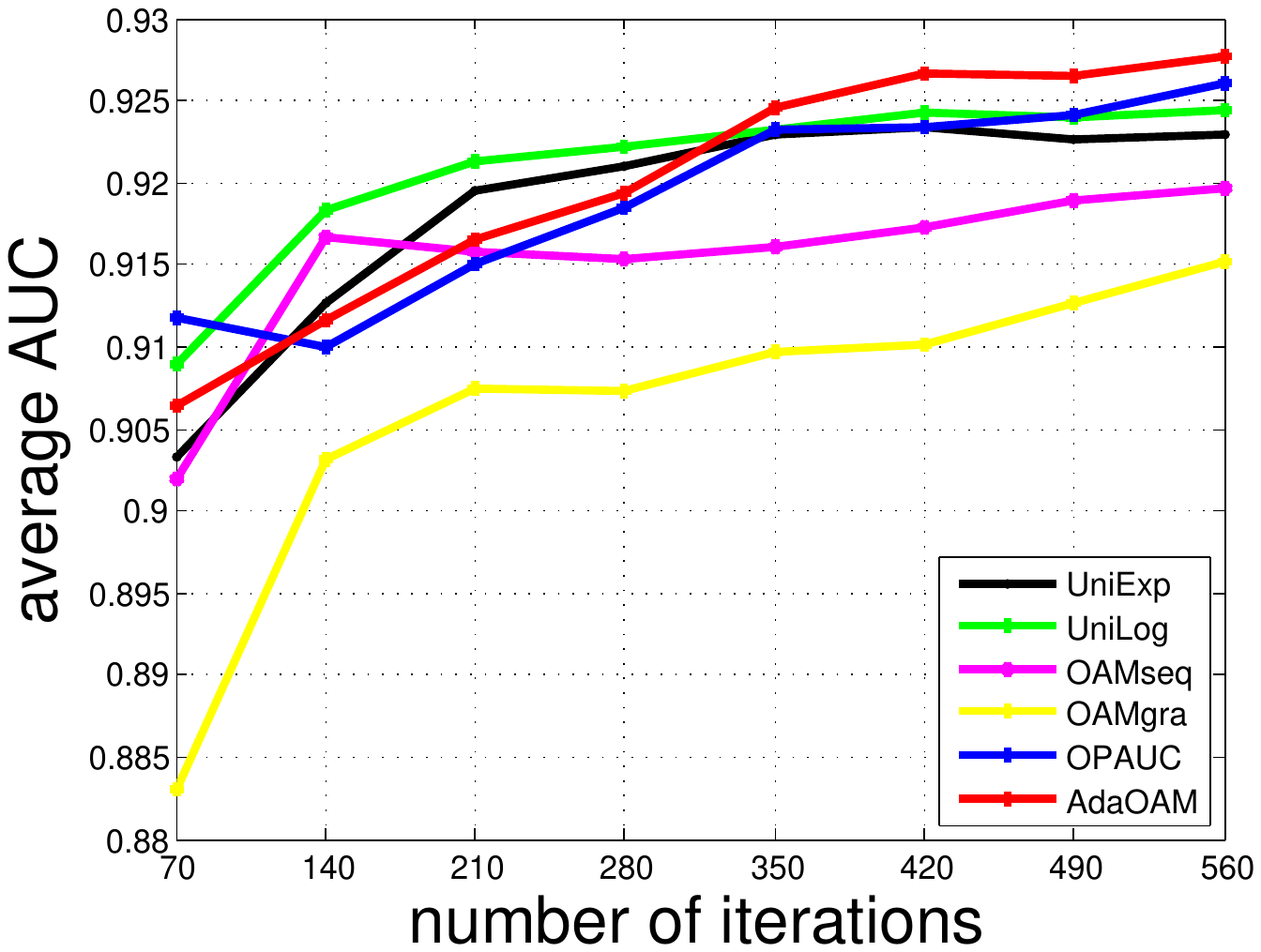}
	}\hspace{0.1in}
	\subfigure[german]{
		\includegraphics[width=2.2in]{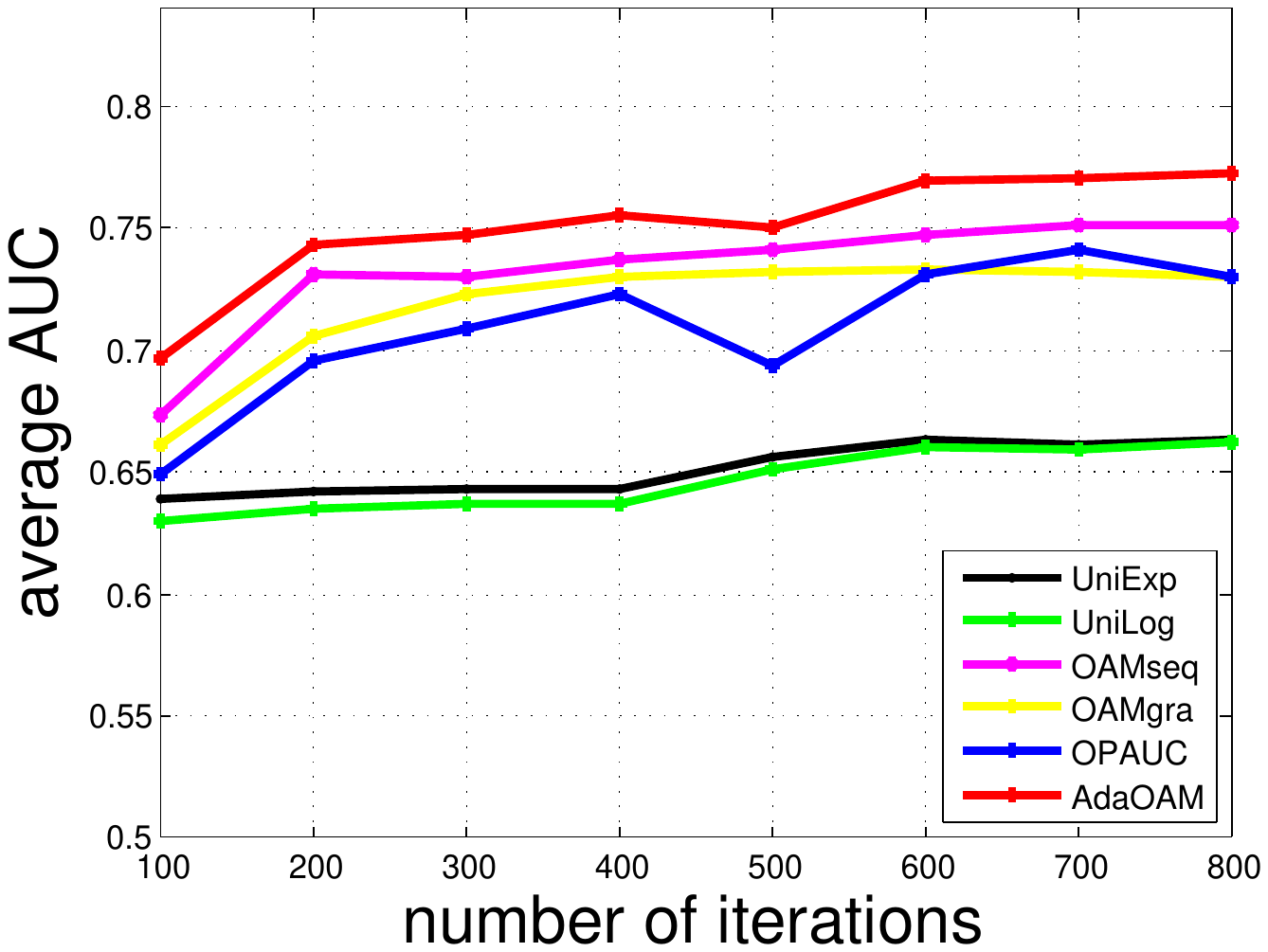}
	}\hspace{0.1in}
	\subfigure[svmguide3]{
		\includegraphics[width=2.2in]{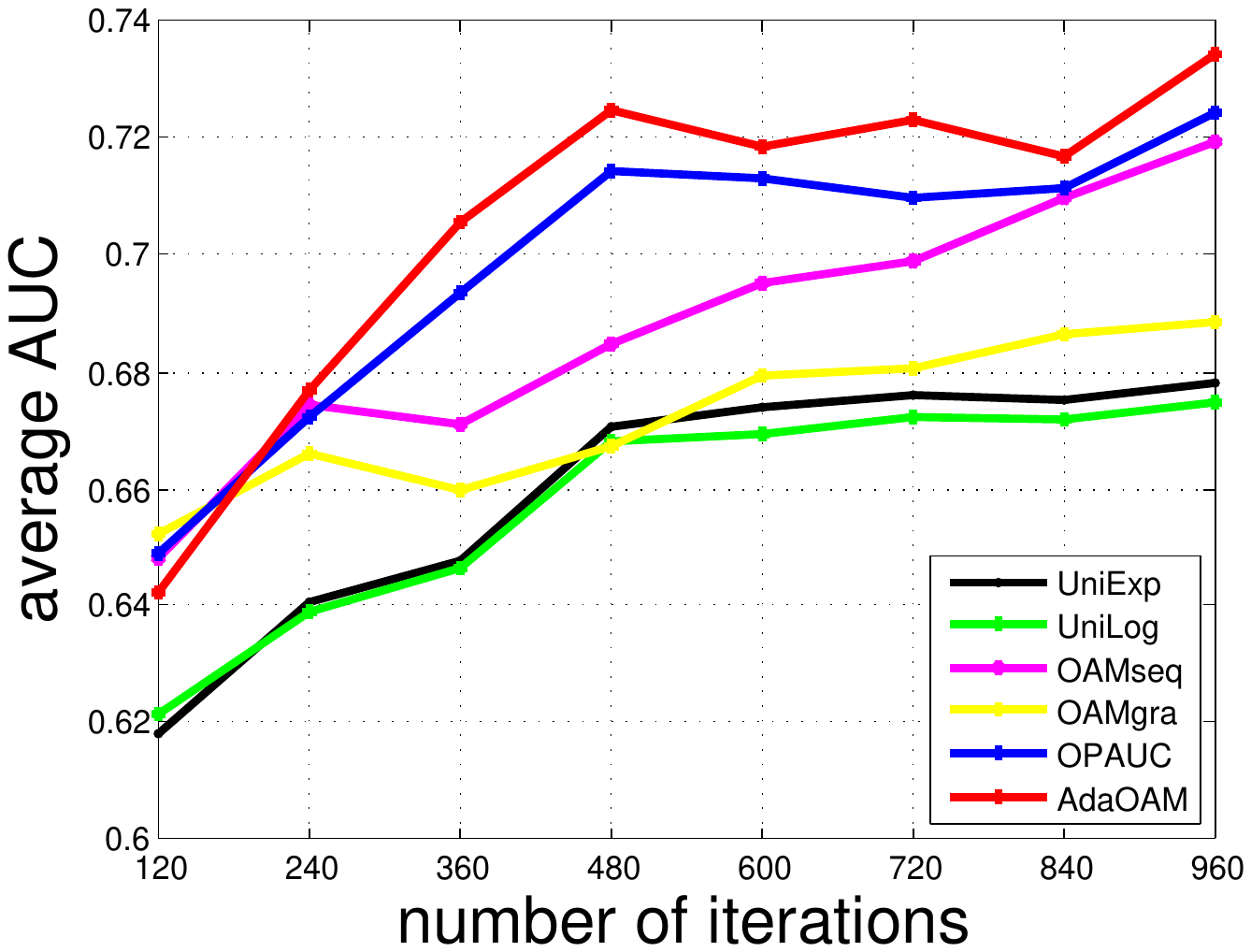}
	}\hspace{0.1in}
	\\ \vspace{-0.1in}
	\centering
	\caption{Evaluation of convergence rate on benchmark datasets.} \label{fig:convergence}
\end{figure*}

\subsection{Evaluation of Parameter Sensitivity}
In this subsection, we proceed to examine the parameter sensitivity of the AdaOAM algorithm. In our study, we experimented the AdaOAM with a set of different learning rates that lies in the wide range of $\eta\in2^{[-8:4]}$. The average test AUC results of AdaOAM across the wide range of learning rates after a single pass through the training data of the respective benchmark datasets are then summarized in Figure~\ref{fig:parameter}(a)-(h). Due to the space constraints, the results of 8 datasets are reported here for illustrations. Since the AdaOAM algorithm provides a per-feature adaptive learning rate at each iteration, it is less sensitive to the learning rate $\eta$ than the standard SGD.

\begin{figure*}[htbp]
	\vspace{-0.1in}
	\centering
	\subfigure[heart]{
		\includegraphics[width=2.2in]{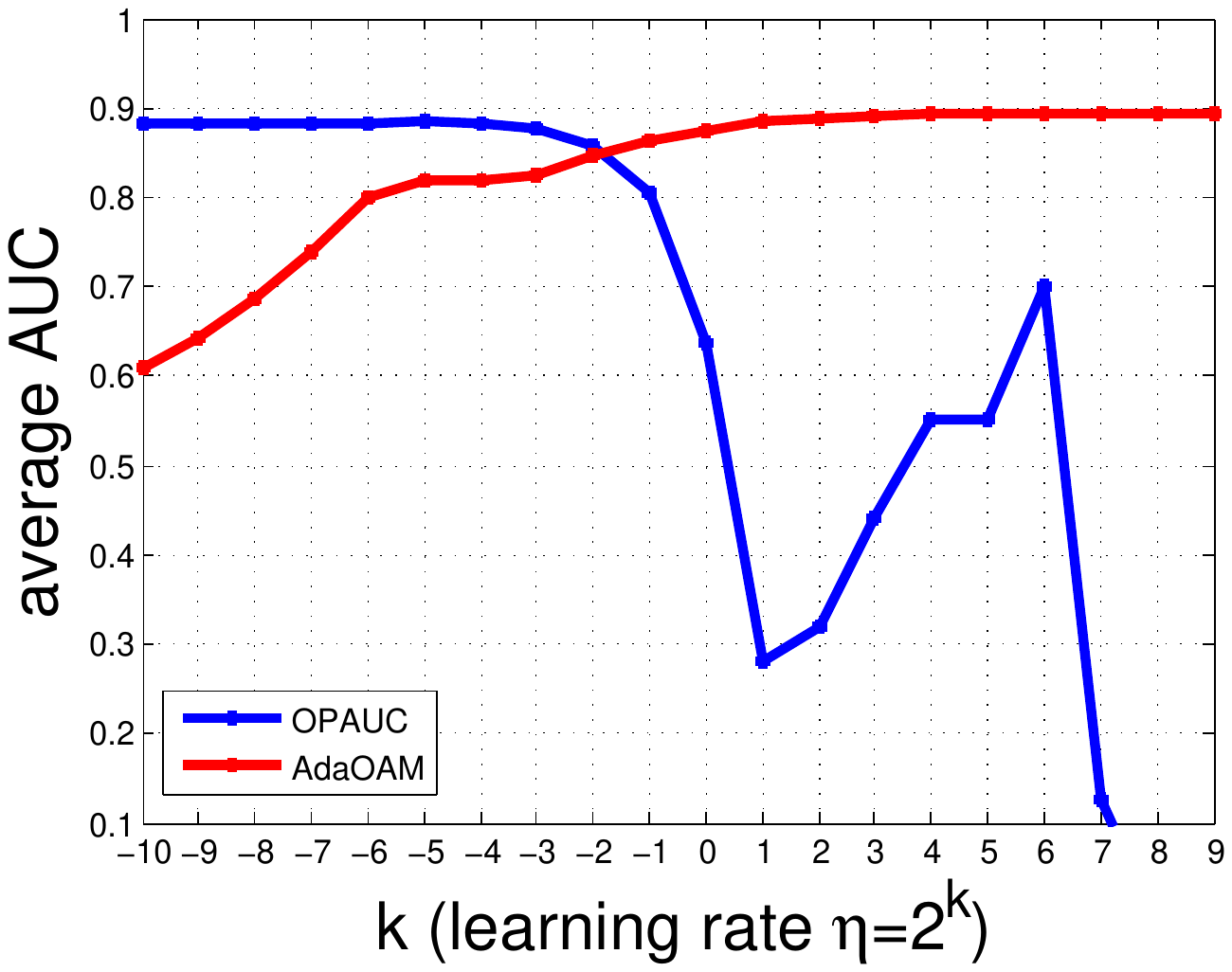}
	}\hspace{0.1in}
	\subfigure[svmguide4]{
		\includegraphics[width=2.2in]{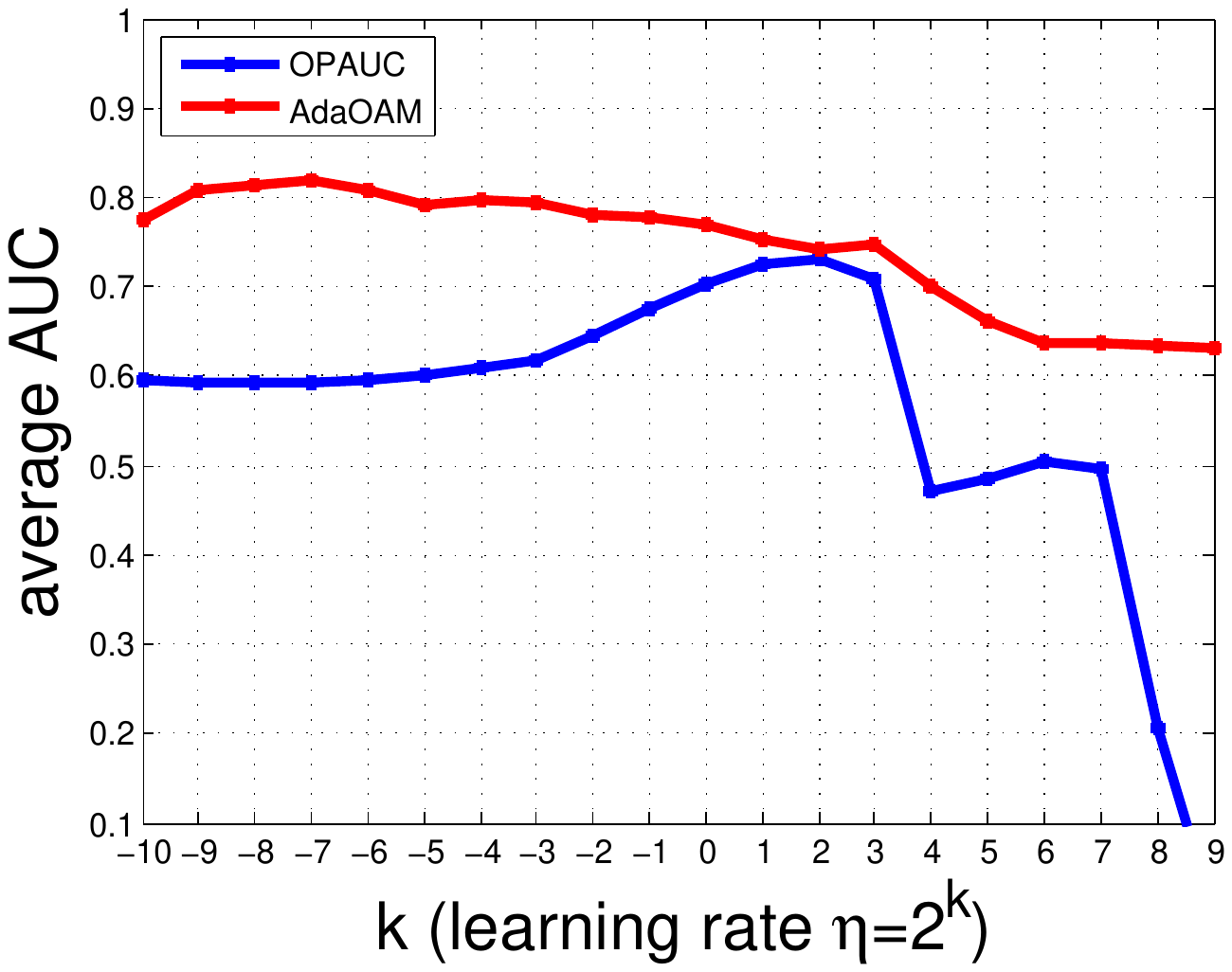}
	}\hspace{0.1in}
	\subfigure[liver-disorders]{
		\includegraphics[width=2.2in]{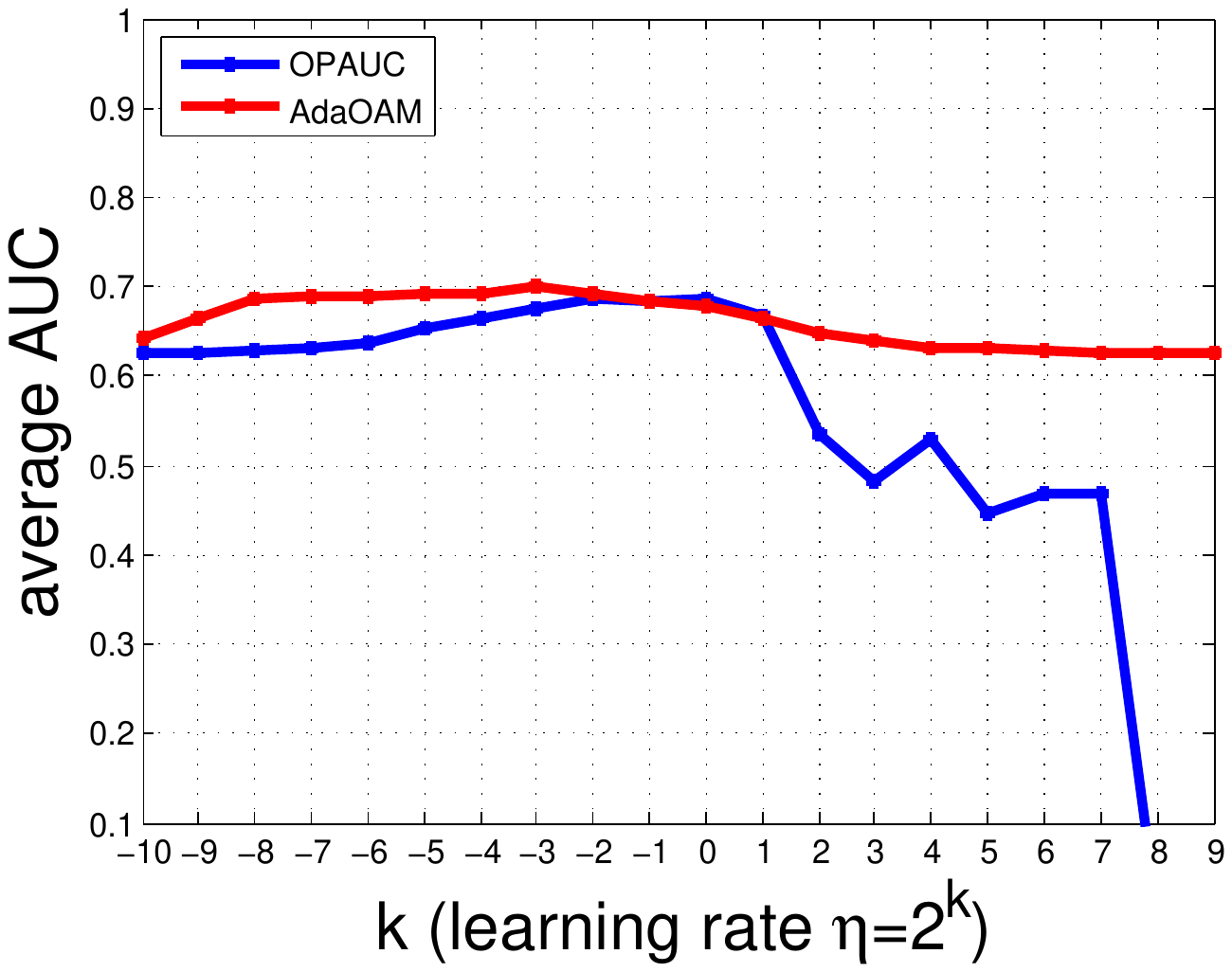}
	} \\ \vspace{-0.1in}
	\subfigure[diabetes]{
		\includegraphics[width=2.2in]{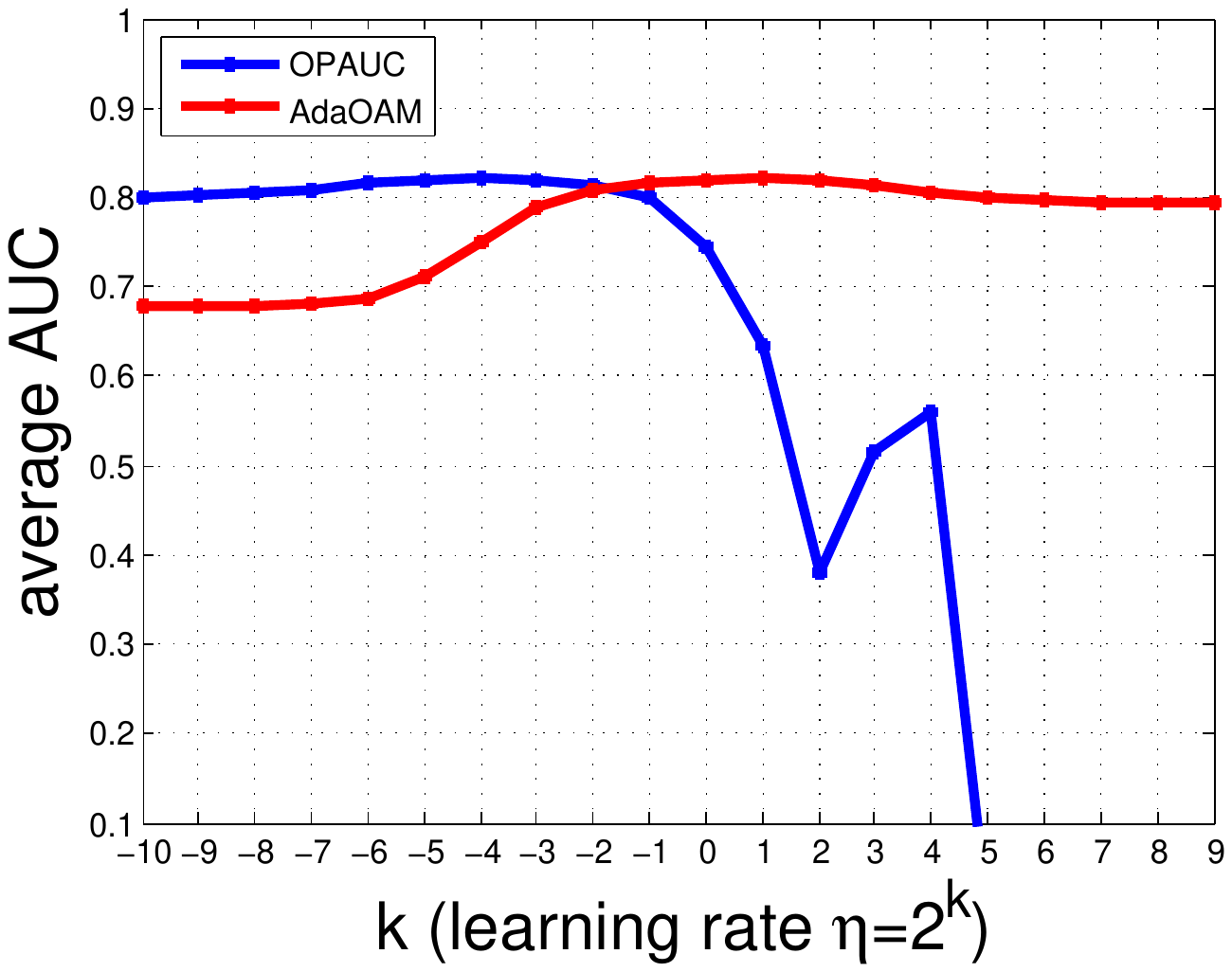}
	}\vspace{-0.1in}
	\subfigure[vehicle]{
		\includegraphics[width=2.2in]{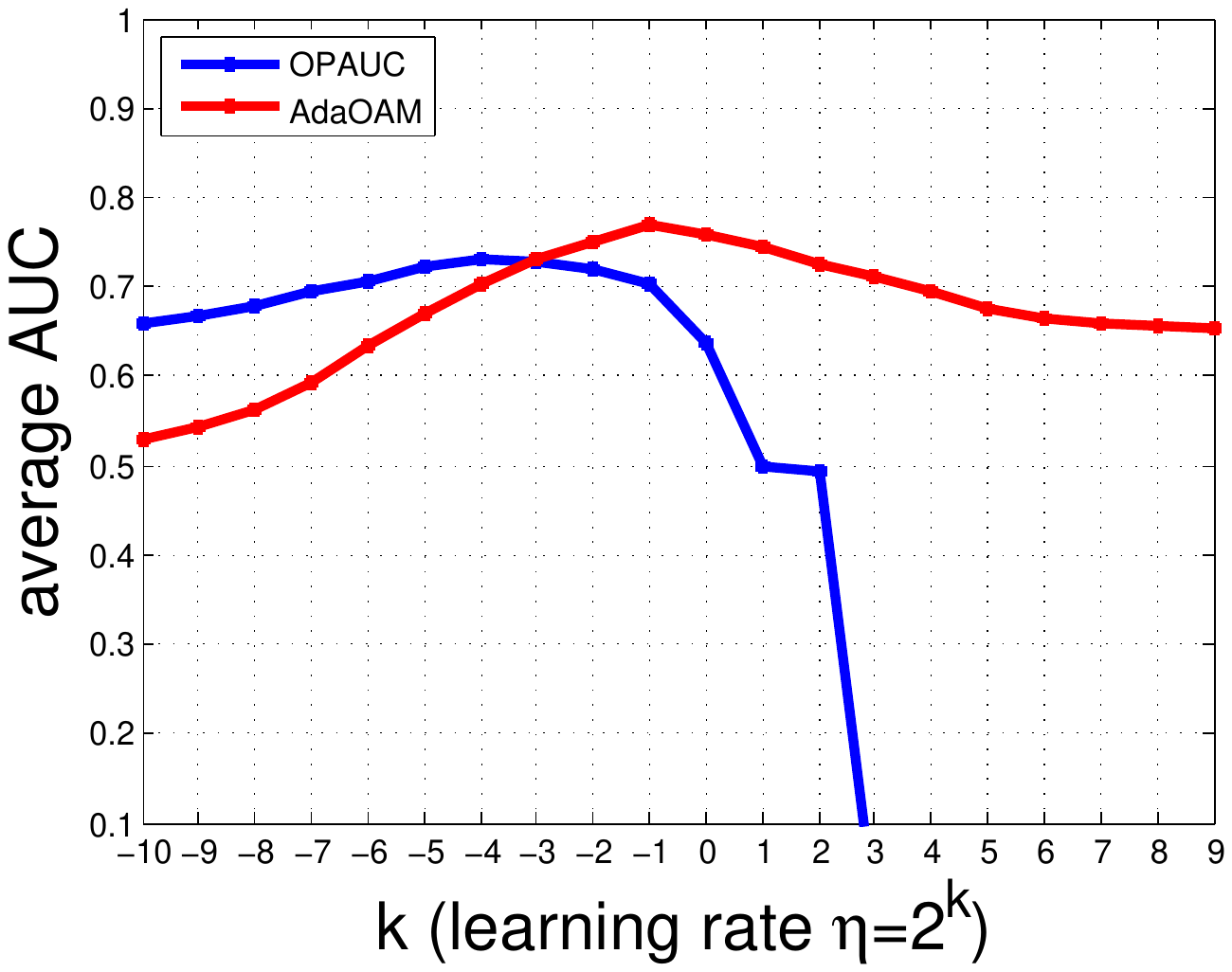}
	}\hspace{0.1in}
	\subfigure[german]{
		\includegraphics[width=2.2in]{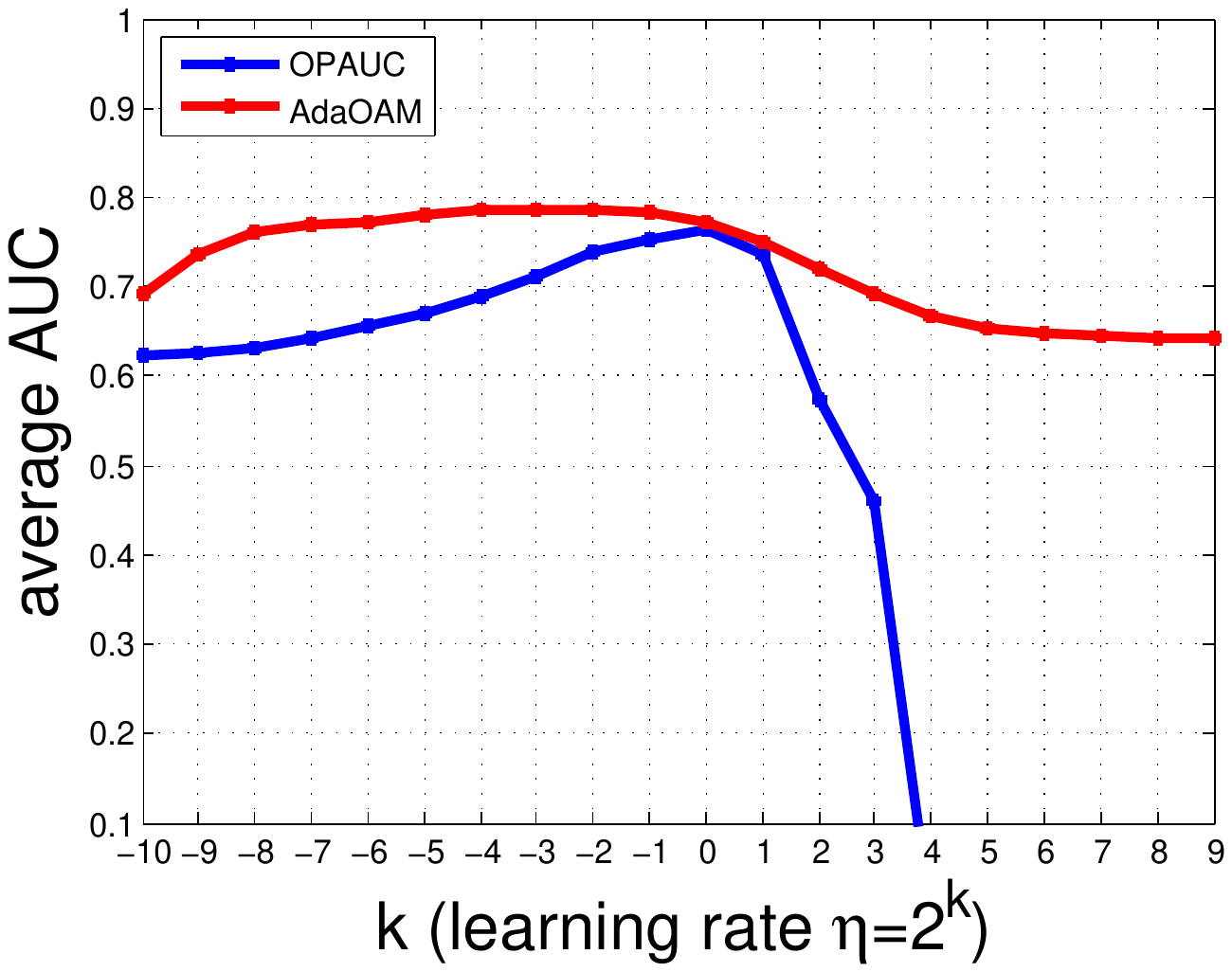}
	}\\ \vspace{-0.1in}
	\caption{Parameter sensitivity on benchmark datasets.} \label{fig:parameter}
\end{figure*}

In \cite{DBLP:conf/icml/GaoJZZ13}, the authors claimed that OPAUC was insensitive to the parameter settings. From Figure~\ref{fig:parameter}, it can be observed that AdaOAM is clearly more robust or insensitive to the learning rate than the OPAUC. The updating strategy by OPAUC is based on simple SGD, which usually requires quite some efforts of tuning the learning rate parameter sufficiently. On the other hand, the adaptive gradient strategy of AdaOAM is theoretically sound for learning rate adaptation since it takes full advantage of the historical gradient information available in the learning process. As such, AdaOAM is less sensitive to the parameter settings. Moreover, AdaOAM exhibits a natural phenomena of decreasing learning rate with increasing iterations.

\subsection{Evaluation of SAdaOAM on High-dimensional Sparse Datasets}

In this subsection, we move on to evaluate the empirical performance of various online AUC maximization algorithms on the publicly available high-dimensional sparse datasets as summarized in Table~\ref{table:sparse}.
The \textbf{pcmac} dataset is downloaded from SVMLin~\footnote{\url{http://vikas.sindhwani.org/svmlin.html}}. The \textbf{farm ads} dataset is generated based on the text ads found on twelve websites that deal with various farm animal related topics~\cite{Mesterharm11activelearning} and downloaded from UCI Machine Learning Repository. The \textbf{Reuters} dataset is the ModApte version~\footnote{\url{http://www.cad.zju.edu.cn/home/dengcai/Data/TextData.html}} upon removing documents with multiple category labels, and contains 8293 documents in 65 categories. The \textbf{sector}, \textbf{rcv1m}, and \textbf{news20} datasets are taken from the LIBSVM dataset website. Note that the original \textbf{sector}, \textbf{Reuters}, \textbf{rcv1m}, and \textbf{news20} are multi-class datasets; in our experiments, we randomly group the multiple classes into two meta-class in which each contains the same number of classes.

\begin{table}[h]
	\vspace{-0.1in}
	\renewcommand*\arraystretch{0.9}
	\begin{center}
		\caption{\small Details of high sparse datasets.}\label{table:sparse}
		\vspace{-0.1in}
		\begin{small}
			\begin{tabular}{|c|cccc|}         \hline
				datasets &    $\#$ inst    & $\#$ dim  &$T_-/T_+$  &sparsity \\
				\hline\hline
				pcmac            & 1,946     & 7,510   & 1.0219  & 3.99\%  \\
				farm ads         & 4,143     & 54,877  & 1.1433  & 0.36\%  \\
				sector           & 9,619     & 55,197  & 1.0056  & 0.29\%  \\
				Reuters          & 8,293     & 18,933  & 42.6474 & 0.25\%  \\
				rcv1m            & 20,242    & 47,236  & 1.0759  & 0.16\%  \\
				news20           & 15,935    & 62,061  & 1.0042  & 0.13\%
				\\\hline
			\end{tabular}
		\end{small}
	\end{center}
	\renewcommand*\arraystretch{0.9}
	\vspace{-0.2in}
\end{table}

In consistent to the previous experimental settings, we conduct 5 fold cross validation on the training sets to identify the most appropriate learning rate $\eta\in2^{[-8:4]}$ and sparse regularization parameter $\theta\in10^{[-8:-1]}$. We fix the parameter $\lambda$ as $10^{-6}$ since its effect on the learning performance is negligible, especially if it is sufficiently small. The performance of all the algorithms are evaluated across 4 independent trials of 5 fold cross validation, and then the reported AUC values are the average of the 20 runs. To showcase the benefit of the SAdaOAM algorithm on high-dimensional sparse datasets, the original AdaOAM is also included for comparison, in addition to the state-of-the-art online learning algorithms considered in the earlier sections. The experimental results of SAdaOAM and other online learning algorithms in terms of AUC performance on all the testing data are then reported in Table~\ref{table:AUC-sparse}.

\begin{table*}[!htpb]
	\vspace{-0.2in}
	\renewcommand*\arraystretch{1.1}
	\begin{center}
		\caption{AUC performance evaluation (mean$\pm$std.) of SAdaOAM versus other online algorithms on the high-dimensional sparse datasets. $\bullet/\circ$ indicates that SAdaOAM is significantly better/worse than the corresponding method (pairwise $t$-tests at 95\% significance level).}\label{table:AUC-sparse}
		\vspace{-0.1in}
		\begin{scriptsize}
			\begin{tabular}{|c|c|c|c|c|c|c|c|c|c|}        \hline
				datasets       & SAdaOAM  &AdaOAM & OPAUC & OAM$_{seq}$ & OAM$_{gra}$ & online Uni-Log & online Uni-Exp  \\
				\hline
				pcmac              &.953	$\pm$ .006 &.929 $\pm$ .008$\bullet$&.929 $\pm$ .013$\bullet$&.939	$\pm$ .012$\bullet$&.903	$\pm$ .028$\bullet$&.923	$\pm$ .024$\bullet$&.931	$\pm$ .010$\bullet$ \\
				farm ads           &.957 	$\pm$ .007 &.938 $\pm$ .010$\bullet$&.896 $\pm$ .033$\bullet$&.951  $\pm$ .007$\bullet$&.947 	$\pm$ .007$\bullet$&.939 	$\pm$ .008$\bullet$&.933 	$\pm$ .007$\bullet$ \\
				sector             &.950	$\pm$ .007 &.946 $\pm$ .011$\bullet$&.936 $\pm$ .008$\bullet$&.923 	$\pm$ .008$\bullet$&.918	$\pm$ .013$\bullet$&.894	$\pm$ .009$\bullet$&.929	$\pm$ .010$\bullet$ \\
				Reuters            &.940 	$\pm$ .030 &.933 $\pm$ .014$\bullet$&.908 $\pm$ .029$\bullet$&.926 	$\pm$ .022$\bullet$&.905 	$\pm$ .031$\bullet$&.891 	$\pm$ .017$\bullet$&.884 	$\pm$ .030$\bullet$ \\
				rcv1m              &.962 	$\pm$ .004 &.955 $\pm$ .002$\bullet$&.955 $\pm$ .004$\bullet$&.946 	$\pm$ .005$\bullet$&.906 	$\pm$ .011$\bullet$&.901 	$\pm$ .021$\bullet$&.920 	$\pm$ .020$\bullet$ \\
				news20             &.976	$\pm$ .019 &.977 $\pm$ .005         &.977 $\pm$ .007 	     &.967	$\pm$ .001$\bullet$&.936	$\pm$ .001$\bullet$&.935 	$\pm$ .003$\bullet$&.957 	$\pm$ .003$\bullet$ \\\hline
				\multicolumn{2}{|c|}{win/tie/loss}       &5/1/0                     &5/1/0                     &6/0/0                     &6/0/0                     &6/0/0    &6/0/0             \\
				\hline
			\end{tabular}
		\end{scriptsize}
	\end{center}
	\renewcommand*\arraystretch{1.1}
	\vspace{-0.2in}
\end{table*}

From the results in Table~\ref{table:AUC-sparse}, the following observations can be drawn. Firstly, all the pairwise loss function optimization algorithms have been observed to outperform the univariate loss function optimization algorithms i.e., Uni-Log and Uni-Exp, which stresses the benefits and high efficacy of pairwise loss function optimization for AUC maximization task. In addition, it is evident from the results that our proposed algorithm SAdaOAM is superior to the non-adaptive or non-sparse methods in most cases, which indicates our proposed algorithm with sparsity is potentially more effective than existing online AUC maximization algorithms that do not exploit the sparsity in the data.

To bring deeper insights on the mechanisms of the proposed algorithm, we take a further to analyze the sparsity level of the final learned model generated by SAdaOAM and the other online learning algorithms. The sparsity of the learned model plays a significant role in large-scale machine learning tasks in terms of both storage cost and efficiency. A sparse learned model not only speeds up the training process, but also reduces the storage cost requirements of large-scale systems. Here, we measure the sparsity level of a learned model based on the ratio of zero elements in the model and the results of the corresponding online algorithms are summarized in Table~\ref{table:AUC-sparsityCom}.

\begin{table*}[!htpb]
	\renewcommand*\arraystretch{1.2}
	\begin{center}
		\caption{Final learned model sparsity (\%) measure of SAdaOAM and the respective online learning algorithms. $\bullet/\circ$ indicates that SAdaOAM is significantly better/worse than the corresponding method (pairwise $t$-tests at 95\% significance level).}\label{table:AUC-sparsityCom}
		\vspace{-0.1in}
		\begin{scriptsize}
			\begin{tabular}{|c|c|c|c|c|c|c|c|c|c|}        \hline
				datasets       &SAdaOAM  &AdaOAM  & OPAUC & OAM$_{seq}$ & OAM$_{gra}$ & online Uni-Log & online Uni-Exp  \\
				\hline
				pcmac              &46.11	&00.03 $\bullet$&00.03	$\bullet$&08.21 $\bullet$&10.64 $\bullet$&00.70 $\bullet$&00.03 $\bullet$  \\
				farm ads           &86.09 	&10.79 $\bullet$&10.79	$\bullet$&46.19 $\bullet$&58.99 $\bullet$&10.79 $\bullet$&00.01 $\bullet$ \\
				sector             &73.78	&17.45 $\bullet$&17.11	$\bullet$&42.10 $\bullet$&62.52 $\bullet$&17.45 $\bullet$&17.45 $\bullet$ \\
				Reuters            &92.59 	&01.53 $\bullet$&01.53	$\bullet$&53.42 $\bullet$&59.96 $\bullet$&01.67 $\bullet$&00.01 $\bullet$ \\
				rcv1m              &84.72 	&28.91 $\bullet$&28.88	$\bullet$&79.06 $\bullet$&69.66 $\bullet$&29.00 $\bullet$&20.91 $\bullet$ \\
				news20             &72.61	&06.94 $\bullet$&06.94	$\bullet$&57.42 $\bullet$&71.09          &06.38 $\bullet$&06.38 $\bullet$\\\hline
				\multicolumn{2}{|c|}{win/tie/loss}       &6/0/0     &6/0/0    &6/0/0    &5/1/0      &6/0/0    &6/0/0    \\
				\hline
			\end{tabular}
		\end{scriptsize}
	\end{center}
	\renewcommand*\arraystretch{1.2}
	\vspace{-0.2in}
\end{table*}

Several observations can be drawn from Table~\ref{table:AUC-sparsityCom}. To begin with, we found that all the algorithms failed to produce high sparsity level solutions for online AUC maximization task except the proposed SAdaOAM algorithm. In contrast, the SAdaOAM approach gives consideration to both performance and sparsity level for all the cases. In particular, it is found that the higher the feature dimension is, the larger the sparsity is achieved by the proposed algorithm. To sum up, the SAdaOAM algorithm is an ideal method of online AUC optimization for those high-dimensional sparse data.

\subsection{Evaluation of SAdaOAM on Sparsity-AUC Tradeoffs}

In this set of experiments, we study the tradeoffs between the sparsity level and the AUC performance for the SAdaOAM algorithm. To achieve this, we set the regularization parameter for  $\ell_1$  with the range $\theta\in10^{[-8:0]}$ for the SAdaOAM method. We apply the same experimental setting executed above and record the average test AUC performance versus the sparsity level designated by the proportion of non-zeros in the final weight solution after a single pass through the training data. These experimental settings make the final model solutions ranging from an almost dense weight to a nearly all-zero one. We randomly choose three datasets for this set of experiments and show the results in Figure~\ref{fig:sparsity}.

\begin{figure*}[ht]
	\vspace{-0.1in}
	\centering
	\subfigure[pcmac]{
		\includegraphics[width=2.2in]{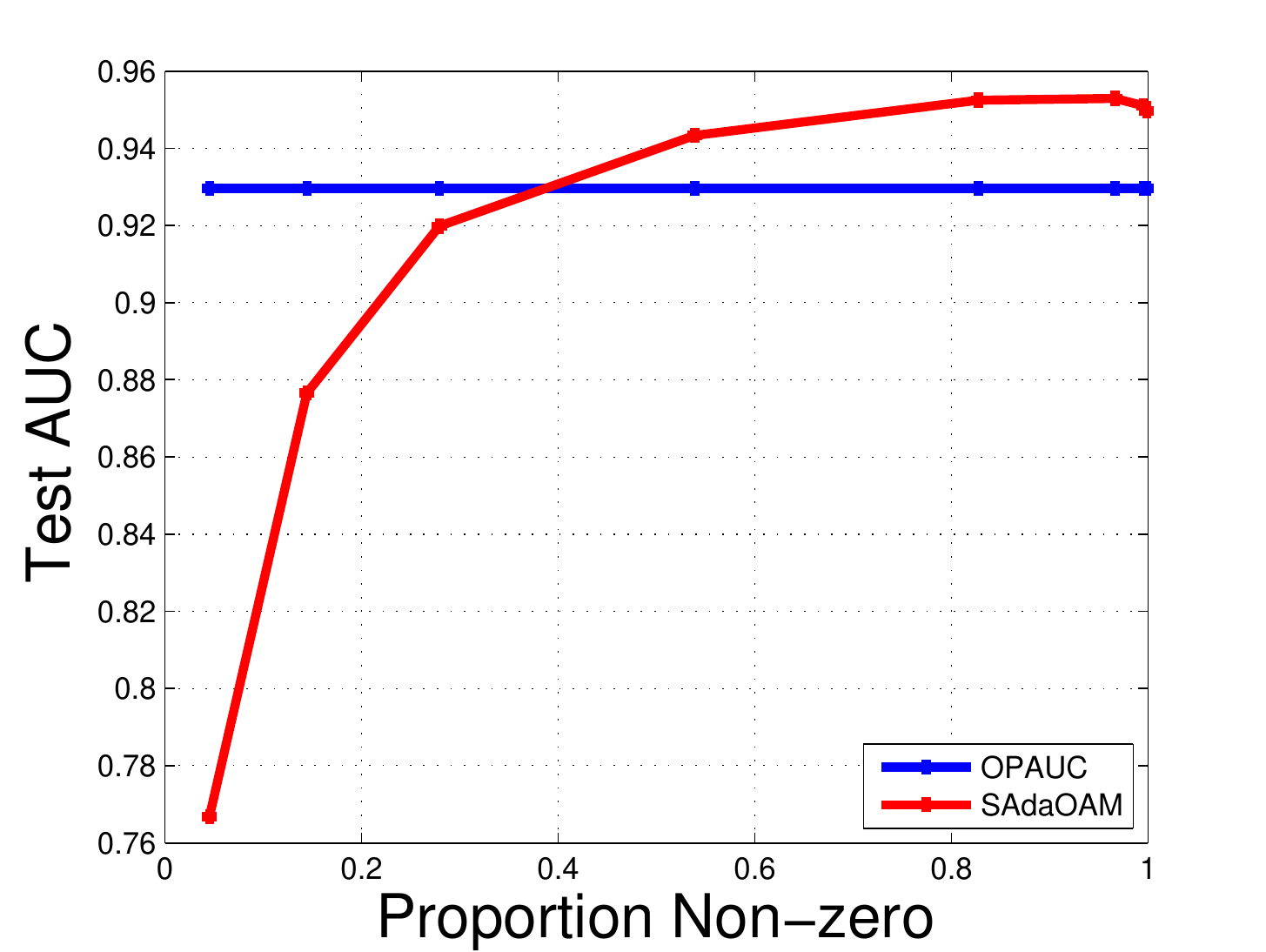}
	}\hspace{0.1in}
	\subfigure[farm ads]{
		\includegraphics[width=2.2in]{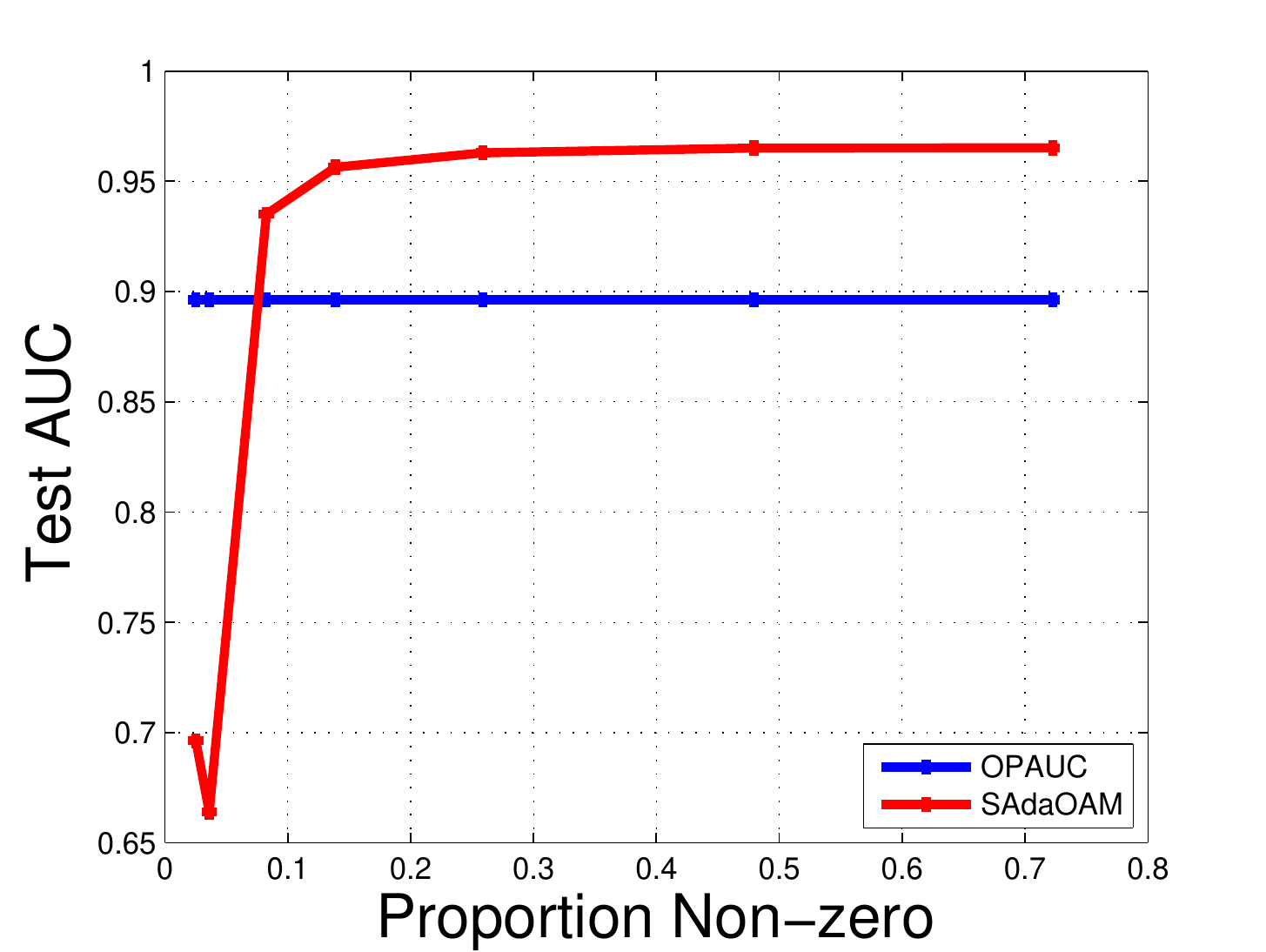}
	}\hspace{0.1in}
	\subfigure[Reuters]{
		\includegraphics[width=2.2in]{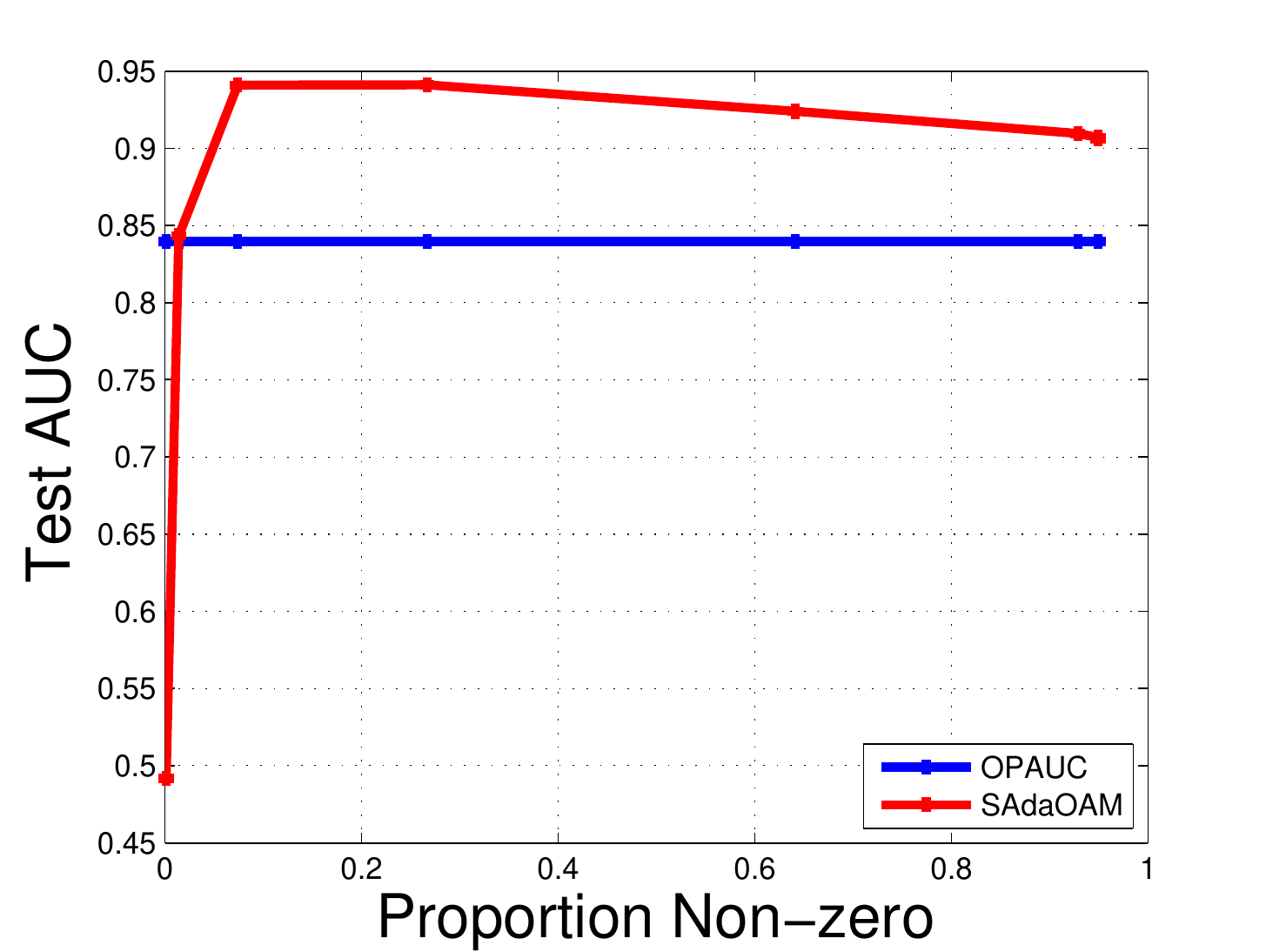}
	}\hspace{0.1in}
\vspace{-0.1in}
	\caption{Test AUC as a function of proportion of non-zeros in weight solution of SAdaOAM (OPAUC plotted for reference). In each plot, the horizontal blue line assigned the baseline performance of OPAUC generating nearly full dense weight models.} \label{fig:sparsity}
\end{figure*}

From this figure, we can observe that there are indeed tradeoffs between the level of sparsity and the AUC performance.  With high regularization parameter $\theta$, the SAdaOAM shows poor performance as expected since the weight vector is overly sparse and exhibits poor generalization. However, when the regularization parameter decreases, the learned weight becomes less sparse and eventually exceed the OPAUC's performance. More importantly, when the sparsity is small enough, the AUC performance of the SAdaOAM algorithm tend to become saturated for some datasets, such as farm ads, where further decreasing the sparsity of the model has very limited improvement on the AUC value. This implies that SAdaOAM can effectively learn a sparse model with small fraction of informative features, which can help remove those redundant features and reduce the testing time complexity.

\subsection{Application to Real World Online Anomaly Detection Task}

Online AUC maximization can be potentially applied to a wide range of applications. In this subsection, we showcase an application of the proposed algorithm namely, AdaOAM, for solving online anomaly detection
tasks. In particular, we begin with an introduction of the applications followed by a presentation of the empirical results. To be precise, consider the following four domains:
{\bf Webspam}: We apply the AdaOAM to detect malicious web pages using the ``webspam-u" dataset with unigram format from the subset used in the Pascal Large Scale Learning Challenge~\cite{DBLP:conf/colcom/WangIP12};
\textbf{Sensor Faults}: We apply the AdaOAM to identify sensor faults in buildings with the ``smartBuilding" dataset~\cite{DBLP:journals/tc/MichaelidesP09}, where the sensors monitor the concentration of the contaminant of interest (such as CO2) in different zones in a building;
\textbf{Malware App}: We apply the AdaOAM to detect mobile malware app with a ``malware" app permission dataset, which is built from the Android Malware Genome Project~\footnote{\url{http://www.malgenomeproject.org/}}~\cite{DBLP:conf/sp/ZhouJ12}. In our experiment, we adopt the dataset preprocessed by~\cite{DBLP:conf/ccs/PengGSLQPNM12} after data cleansing and duplication removal;
\textbf{Bioinformatics}: We apply the AdaOAM to solve a bioinformatics problem with the ``protein-h" dataset from the prediction task of the KDD Cup 2004~\cite{DBLP:journals/sigkdd/CaruanaJB04}. The aim is to predict which proteins are homologous to a native (query) sentence. Non-homologous sequences are labeled as anomalies.

Table~\ref{table:anomaly} summarizes the details of these datasets related to the above four different domains.

\begin{table}[ht]
	\vspace{-0.2in}
	\renewcommand*\arraystretch{1.2}
	\begin{center}
		\caption{\small Details of anomaly detection datasets.}\label{table:anomaly}
		\vspace{-0.1in}
		\begin{small}
			\begin{tabular}{|c|ccc|}         \hline
				datasets &    $\#$ inst        &$\#$ dim   &$T_{-} / T_{+}$ \\
				\hline\hline
				webspam-u         & 350,000   & 254 & 1.5397   \\
				smartBuilding     & 20,000    & 14  & 85.2069  \\
				malware           & 72,139    & 122 & 88.2809  \\
				protein-h         & 145,751   & 74  & 111.4622
				\\\hline
			\end{tabular}
		\end{small}
	\end{center}
	\renewcommand*\arraystretch{1.2}
	\vspace{-0.2in}
\end{table}

Table~\ref{table:AUC-anomaly} and Figure~\ref{fig/time_anomaly} have shown the performance and efficiency of the proposed algorithm for online anomaly detection task respectively. From Table~\ref{table:AUC-anomaly}, we observe that the proposed AdaOAM algorithm also outperforms other methods. Although OAM$_{seq}$ and OAM$_{gra}$ obtain comparably good results, their computational costs are very high, which are impractical for real-world learning tasks. Again, the AdaOAM proves its efficiency for real-world applications.

\begin{table*}[ht]
	\vspace{-0.1in}
	\renewcommand*\arraystretch{1.2}
	\begin{center}
		\caption{AUC performance evaluation (mean$\pm$std.) of AdaOAM versus the other online algorithms on anomaly detection datasets. $\bullet/\circ$ indicates that AdaOAM is significantly better/worse than the corresponding method (pairwise $t$-tests at 95\% significance level).}\label{table:AUC-anomaly}
		\vspace{-0.1in}
		\begin{scriptsize}
			\begin{tabular}{|c|c|c|c|c|c|c|c|c|c|}        \hline
				datasets       & AdaOAM  & OPAUC & OAM$_{seq}$ & OAM$_{gra}$ & online Uni-Log & online Uni-Exp  \\
				\hline
				webspam-u          &.964 	$\pm$ .005 &.959 	$\pm$ .006$\bullet$&.963 	$\pm$ .005         &.962 	$\pm$ .005         &.923 	$\pm$ .005$\bullet$&.920 	$\pm$ .006$\bullet$ \\
				smartBuilding      &.838 	$\pm$ .044 &.629 	$\pm$ .070$\bullet$&.629 	$\pm$ .069$\bullet$&.631 	$\pm$ .069$\bullet$&.749 	$\pm$ .020$\bullet$&.758 	$\pm$ .022$\bullet$ \\
				malware            &.967 	$\pm$ .008 &.919 	$\pm$ .008$\bullet$&.959 	$\pm$ .009$\bullet$&.953 	$\pm$ .009$\bullet$&.695 	$\pm$ .009$\bullet$&.765 	$\pm$ .009$\bullet$ \\
				protein-h          &.972 	$\pm$ .004 &.958    $\pm$ .005$\bullet$&.970    $\pm$ .007         &.968    $\pm$ .007$\bullet$&.890    $\pm$ .009$\bullet$&.915    $\pm$ .009$\bullet$ \\\hline
				\multicolumn{2}{|c|}{win/tie/loss}       &4/0/0                     &2/2/0                     &3/1/0                     &4/0/0                     &4/0/0                 \\
				\hline
			\end{tabular}
		\end{scriptsize}
	\end{center}
	\renewcommand*\arraystretch{1.2}
	\vspace{-0.1in}
\end{table*}

\begin{figure*}[!htbp]
	\centering\includegraphics[width=6in]{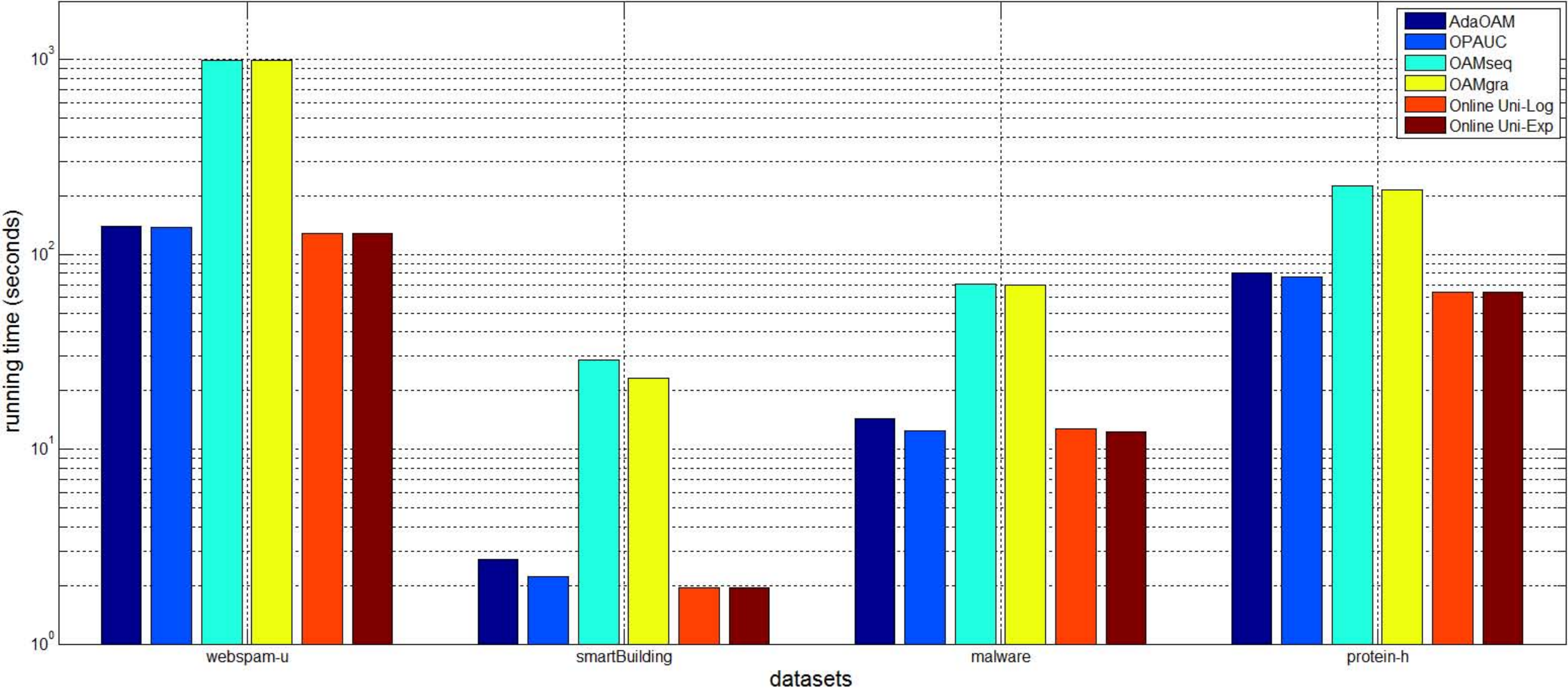}
	\vspace{-0.2in}
	\caption{Comparison of the running time (in seconds) of AdaOAM and other online learning algorithms on anomaly detection datasets. Notice that the \emph{y}-axis is in log-scale.}\label{fig/time_anomaly}\vspace{-0.2in}
\end{figure*}

\section{Conclusion}

In this paper, we have proposed two adaptive subgradient online AUC maximization approaches for handling both regular and high-dimensional sparse data, which considered the historical component-wise gradient information for more efficient and adaptive learning. Our proposed algorithms employ the second order information to speed up online AUC maximization, and are less sensitive to parameter setting than that of the simple SGD strategy. Theoretically, we have derived and analyzed the regret bound of the adaptive online AUC maximization approaches and verified that the proposed algorithms would achieve lower regret bound when handling both regular and high-dimensional sparse data. Empirically, we have also conducted extensive experimental studies with comparisons to a number of competing online AUC optimization algorithms on diverse types of data including many benchmark datasets, high-dimensional sparse datasets, and several real-world anomaly detection tasks. Overall, the obtained empirical results observations agree with our theoretical analyses and the results also verified the effectiveness and efficiency of the proposed AdaOAM and SAdaOAM algorithms.

\ifCLASSOPTIONcompsoc
  \section*{Acknowledgments}
  This research is partially supported by the Multi-plAtform Game Innovation Centre (MAGIC) in Nanyang Technological University. MAGIC is funded by the Interactive Digital Media Programme Office (IDMPO) hosted by the Media Development Authority of Singapore. IDMPO was established in 2006 under the mandate of the National Research Foundation to deepen Singapore's research capabilities in interactive digital media (IDM), fuel innovation and shape the future of media. In addition, the last author is grateful to the support provided by the Singapore MOE tier 1 research grant (C220/MSS14C003).
\else
  \section*{Acknowledgment}
\fi

\bibliographystyle{IEEEtran}
\bibliography{mybibfile}

\begin{thebibliography}{10}
\providecommand{\url}[1]{#1}
\csname url@samestyle\endcsname
\providecommand{\newblock}{\relax}
\providecommand{\bibinfo}[2]{#2}
\providecommand{\BIBentrySTDinterwordspacing}{\spaceskip=0pt\relax}
\providecommand{\BIBentryALTinterwordstretchfactor}{4}
\providecommand{\BIBentryALTinterwordspacing}{\spaceskip=\fontdimen2\font plus
\BIBentryALTinterwordstretchfactor\fontdimen3\font minus
  \fontdimen4\font\relax}
\providecommand{\BIBforeignlanguage}[2]{{%
\expandafter\ifx\csname l@#1\endcsname\relax
\typeout{** WARNING: IEEEtran.bst: No hyphenation pattern has been}%
\typeout{** loaded for the language `#1'. Using the pattern for}%
\typeout{** the default language instead.}%
\else
\language=\csname l@#1\endcsname
\fi
#2}}
\providecommand{\BIBdecl}{\relax}
\BIBdecl

\bibitem{Fawcett:2006:IRA:1159473.1159475}
T.~Fawcett, ``An introduction to roc analysis,'' \emph{Pattern Recogn. Lett.},
  vol.~27, pp. 861--874, 2006.

\bibitem{DBLP:conf/nips/CortesM03}
C.~Cortes and M.~Mohri, ``Auc optimization vs. error rate minimization,'' in
  \emph{NIPS}, 2003.

\bibitem{DBLP:conf/pkdd/CaldersJ07}
T.~Calders and S.~Jaroszewicz, ``Efficient auc optimization for
  classification,'' in \emph{PKDD}, 2007, pp. 42--53.

\bibitem{DBLP:conf/icml/Joachims05}
T.~Joachims, ``A support vector method for multivariate performance measures,''
  in \emph{ICML}, 2005, pp. 377--384.

\bibitem{DBLP:journals/jmlr/RudinS09}
C.~Rudin and R.~E. Schapire, ``Margin-based ranking and an equivalence between
  adaboost and rankboost,'' \emph{Journal of Machine Learning Research},
  vol.~10, pp. 2193--2232, 2009.

\bibitem{DBLP:conf/icml/ZhaoHJY11}
P.~Zhao, S.~C.~H. Hoi, R.~Jin, and T.~Yang, ``Online {AUC} maximization,'' in
  \emph{Proceedings of the 28th International Conference on Machine Learning,
  {ICML} 2011, Bellevue, Washington, USA, June 28 - July 2, 2011}, 2011, pp.
  233--240.

\bibitem{DBLP:conf/icml/GaoJZZ13}
W.~Gao, R.~Jin, S.~Zhu, and Z.-H. Zhou, ``One-pass auc optimization,'' in
  \emph{ICML (3)}, 2013, pp. 906--914.

\bibitem{DBLP:conf/icml/KarS0K13}
P.~Kar, B.~K. Sriperumbudur, P.~Jain, and H.~Karnick, ``On the generalization
  ability of online learning algorithms for pairwise loss functions,'' in
  \emph{Proceedings of the 30th International Conference on Machine Learning,
  {ICML} 2013, Atlanta, GA, USA, 16-21 June 2013}, 2013, pp. 441--449.

\bibitem{gao2012consistency}
W.~Gao and Z.-H. Zhou, ``On the consistency of auc optimization,'' \emph{arXiv
  preprint arXiv:1208.0645}, 2012.

\bibitem{DBLP:journals/jmlr/DuchiHS11}
J.~C. Duchi, E.~Hazan, and Y.~Singer, ``Adaptive subgradient methods for online
  learning and stochastic optimization,'' \emph{Journal of Machine Learning
  Research}, vol.~12, pp. 2121--2159, 2011.

\bibitem{DBLP:books/daglib/0016248}
N.~Cesa-Bianchi and G.~Lugosi, \emph{Prediction, learning, and games}.\hskip
  1em plus 0.5em minus 0.4em\relax Cambridge University Press, 2006.

\bibitem{DBLP:journals/jmlr/CrammerDKSS06}
K.~Crammer, O.~Dekel, J.~Keshet, S.~Shalev-Shwartz, and Y.~Singer, ``Online
  passive-aggressive algorithms,'' \emph{Journal of Machine Learning Research},
  vol.~7, pp. 551--585, 2006.

\bibitem{DBLP:journals/jmlr/ZhaoHJ11}
P.~Zhao, S.~C.~H. Hoi, and R.~Jin, ``Double updating online learning,''
  \emph{Journal of Machine Learning Research}, vol.~12, pp. 1587--1615, 2011.

\bibitem{DBLP:journals/ml/HoiJZY13}
S.~C.~H. Hoi, R.~Jin, P.~Zhao, and T.~Yang, ``Online multiple kernel
  classification,'' \emph{Machine Learning}, vol.~90, no.~2, pp. 289--316,
  2013.

\bibitem{DBLP:journals/ai/ZhaoHWL14}
P.~Zhao, S.~C.~H. Hoi, J.~Wang, and B.~Li, ``Online transfer learning,''
  \emph{Artif. Intell.}, vol. 216, pp. 76--102, 2014.

\bibitem{rosenblatt1958perceptron}
F.~Rosenblatt, ``The perceptron: a probabilistic model for information storage
  and organization in the brain.'' \emph{Psychological review}, vol.~65, no.~6,
  p. 386, 1958.

\bibitem{DBLP:conf/icml/DredzeCP08}
M.~Dredze, K.~Crammer, and F.~Pereira, ``Confidence-weighted linear
  classification,'' in \emph{Machine Learning, Proceedings of the Twenty-Fifth
  International Conference {(ICML} 2008), Helsinki, Finland, June 5-9, 2008},
  2008, pp. 264--271.

\bibitem{DBLP:conf/nips/CrammerKD09}
K.~Crammer, A.~Kulesza, and M.~Dredze, ``Adaptive regularization of weight
  vectors,'' in \emph{Advances in Neural Information Processing Systems 22:
  23rd Annual Conference on Neural Information Processing Systems 2009.
  Proceedings of a meeting held 7-10 December 2009, Vancouver, British
  Columbia, Canada.}, 2009, pp. 414--422.

\bibitem{DBLP:conf/nips/OrabonaC10}
F.~Orabona and K.~Crammer, ``New adaptive algorithms for online
  classification,'' in \emph{Advances in Neural Information Processing Systems
  23: 24th Annual Conference on Neural Information Processing Systems 2010.
  Proceedings of a meeting held 6-9 December 2010, Vancouver, British Columbia,
  Canada.}, 2010, pp. 1840--1848.

\bibitem{DBLP:conf/icml/HoiWZ12}
J.~Wang, P.~Zhao, and S.~C.~H. Hoi, ``Exact soft confidence-weighted
  learning,'' in \emph{Proceedings of the 29th International Conference on
  Machine Learning, {ICML} 2012, Edinburgh, Scotland, UK, June 26 - July 1,
  2012}, 2012.

\bibitem{DBLP:journals/tkde/WangZH14}
------, ``Cost-sensitive online classification,'' \emph{{IEEE} Trans. Knowl.
  Data Eng.}, vol.~26, no.~10, pp. 2425--2438, 2014.

\bibitem{DBLP:conf/kdd/ZhaoH13}
P.~Zhao and S.~C.~H. Hoi, ``Cost-sensitive online active learning with
  application to malicious {URL} detection,'' in \emph{The 19th {ACM} {SIGKDD}
  International Conference on Knowledge Discovery and Data Mining, {KDD} 2013,
  Chicago, IL, USA, August 11-14, 2013}, 2013, pp. 919--927.

\bibitem{DBLP:conf/sdm/HoiZ13}
S.~C.~H. Hoi and P.~Zhao, ``Cost-sensitive double updating online learning and
  its application to online anomaly detection,'' in \emph{Proceedings of the
  13th {SIAM} International Conference on Data Mining, May 2-4, 2013. Austin,
  Texas, {USA.}}, 2013, pp. 207--215.

\bibitem{DBLP:journals/jmlr/LangfordLZ09}
J.~Langford, L.~Li, and T.~Zhang, ``Sparse online learning via truncated
  gradient,'' \emph{Journal of Machine Learning Research}, vol.~10, pp.
  777--801, 2009.

\bibitem{DBLP:journals/jmlr/DuchiS09}
J.~C. Duchi and Y.~Singer, ``Efficient online and batch learning using forward
  backward splitting,'' \emph{Journal of Machine Learning Research}, vol.~10,
  pp. 2899--2934, 2009.

\bibitem{DBLP:journals/jmlr/Xiao10}
L.~Xiao, ``Dual averaging methods for regularized stochastic learning and
  online optimization,'' \emph{Journal of Machine Learning Research}, vol.~11,
  pp. 2543--2596, 2010.

\bibitem{DBLP:journals/jmlr/Shalev-ShwartzT11}
S.~Shalev{-}Shwartz and A.~Tewari, ``Stochastic methods for
  \emph{l}\({}_{\mbox{1}}\)-regularized loss minimization,'' \emph{Journal of
  Machine Learning Research}, vol.~12, pp. 1865--1892, 2011.

\bibitem{DBLP:journals/mp/Nesterov09}
Y.~Nesterov, ``Primal-dual subgradient methods for convex problems,''
  \emph{Math. Program.}, vol. 120, no.~1, pp. 221--259, 2009.

\bibitem{DBLP:conf/icml/Zinkevich03}
M.~Zinkevich, ``Online convex programming and generalized infinitesimal
  gradient ascent,'' in \emph{Machine Learning, Proceedings of the Twentieth
  International Conference {(ICML} 2003), August 21-24, 2003, Washington, DC,
  {USA}}, 2003, pp. 928--936.

\bibitem{DBLP:journals/jsw/YounM07}
S.~Youn and D.~McLeod, ``Spam email classification using an adaptive
  ontology,'' \emph{{JSW}}, vol.~2, no.~3, pp. 43--55, 2007.

\bibitem{paultseng2008}
P.~Tseng, ``On accelerated proximal gradient methods for convex-concave
  optimization,'' \emph{Technical report, Department of Mathematics, University
  of Washington}, 2008.

\bibitem{DBLP:journals/jmlr/HoiWZ14}
S.~C.~H. Hoi, J.~Wang, and P.~Zhao, ``{LIBOL:} a library for online learning
  algorithms,'' \emph{Journal of Machine Learning Research}, vol.~15, no.~1,
  pp. 495--499, 2014.

\bibitem{DBLP:conf/icml/KotlowskiDH11}
W.~Kotlowski, K.~Dembczynski, and E.~H{\"{u}}llermeier, ``Bipartite ranking
  through minimization of univariate loss,'' in \emph{Proceedings of the 28th
  International Conference on Machine Learning, {ICML} 2011, Bellevue,
  Washington, USA, June 28 - July 2, 2011}, 2011, pp. 1113--1120.

\bibitem{DBLP:journals/pami/LiTZ13}
N.~Li, I.~W. Tsang, and Z.~Zhou, ``Efficient optimization of performance
  measures by classifier adaptation,'' \emph{{IEEE} Trans. Pattern Anal. Mach.
  Intell.}, vol.~35, no.~6, pp. 1370--1382, 2013.

\bibitem{DBLP:journals/jmlr/TsangKC05}
I.~W. Tsang, J.~T. Kwok, and P.~Cheung, ``Core vector machines: Fast {SVM}
  training on very large data sets,'' \emph{Journal of Machine Learning
  Research}, vol.~6, pp. 363--392, 2005.

\bibitem{Mesterharm11activelearning}
C.~Mesterharm and M.~J. Pazzani, ``Active learning using on-line algorithms,''
  in \emph{In ACM SIGKDD International Conference on Knowledge Discovery and
  Data Mining (KDD}, 2011.

\bibitem{DBLP:conf/colcom/WangIP12}
D.~Wang, D.~Irani, and C.~Pu, ``Evolutionary study of web spam: Webb spam
  corpus 2011 versus webb spam corpus 2006,'' in \emph{8th International
  Conference on Collaborative Computing: Networking, Applications and
  Worksharing, CollaborateCom 2012, Pittsburgh, PA, USA, October 14-17, 2012},
  2012, pp. 40--49.

\bibitem{DBLP:journals/tc/MichaelidesP09}
M.~P. Michaelides and C.~G. Panayiotou, ``{SNAP:} fault tolerant event location
  estimation in sensor networks using binary data,'' \emph{{IEEE} Trans.
  Computers}, vol.~58, no.~9, pp. 1185--1197, 2009.

\bibitem{DBLP:conf/sp/ZhouJ12}
Y.~Zhou and X.~Jiang, ``Dissecting android malware: Characterization and
  evolution,'' in \emph{IEEE Symposium on Security and Privacy}, 2012, pp.
  95--109.

\bibitem{DBLP:conf/ccs/PengGSLQPNM12}
H.~Peng, C.~S. Gates, B.~P. Sarma, N.~Li, Y.~Qi, R.~Potharaju, C.~Nita-Rotaru,
  and I.~Molloy, ``Using probabilistic generative models for ranking risks of
  android apps,'' in \emph{ACM Conference on Computer and Communications
  Security}, 2012, pp. 241--252.

\bibitem{DBLP:journals/sigkdd/CaruanaJB04}
R.~Caruana, T.~Joachims, and L.~Backstrom, ``Kdd-cup 2004: results and
  analysis,'' \emph{{SIGKDD} Explorations}, vol.~6, no.~2, pp. 95--108, 2004.

\end{thebibliography}

\begin{IEEEbiography}[{\includegraphics[width=1in,height=1.25in,clip,keepaspectratio]{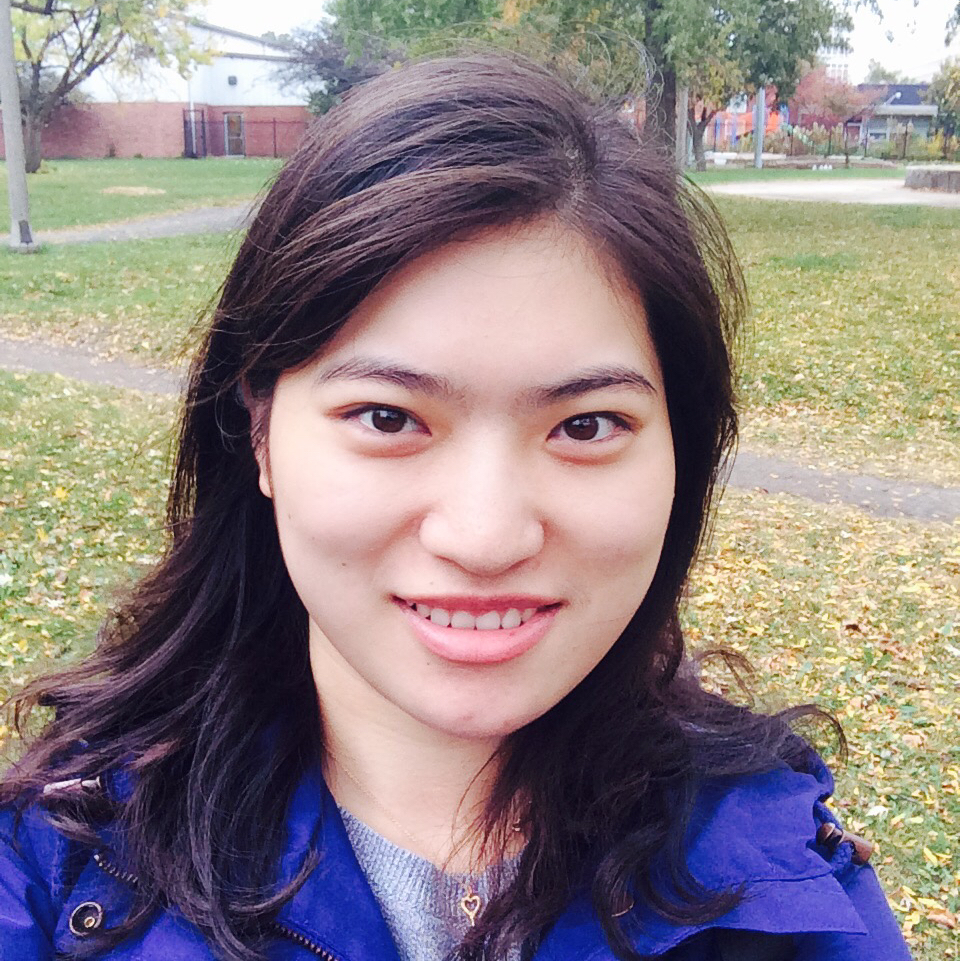}}]{Yi Ding} is currently a PhD student from the Department of Computer Science in the University of Chicago. She received her bachelor degree from Beijing Jiaotong University, Beijing, P.R. China, in 2012. Her research interests are statistical machine learning, optimization, and numerical analysis.
\end{IEEEbiography}
\begin{IEEEbiography}[{\includegraphics[width=1in,height=1.25in,clip,keepaspectratio]{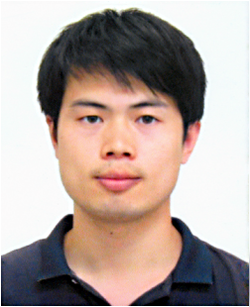}}]{Peilin Zhao} is currently a Research Scientist at Institute for Infocomm Research (I2R), A*STAR, Singapore.  He received his PhD from the School of Computer Engineering at the Nanyang Technological University, Singapore, in 2012 and his bachelor degree from Zhejiang University, Hangzhou, P.R. China, in 2008. His research interests are statistical machine learning, and and its applications to big data analytics, etc.
\end{IEEEbiography}
\begin{IEEEbiography}[{\includegraphics[width=1in,height=1.25in,clip,keepaspectratio]{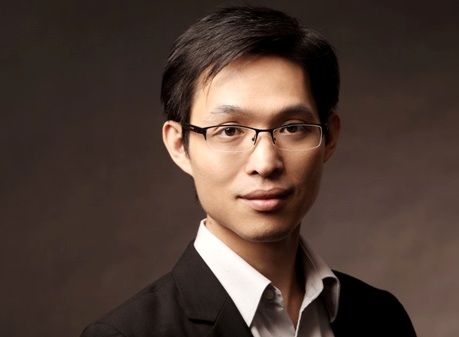}}]{Steven C.H. Hoi} is currently an Associate Professor of the School of Information Sytems, Singapore Management Unviersity, Singapore. Prior to joining SMU, he was Associate Professor with Nanyang Technological University, Singapore. He received his Bachelor degree from Tsinghua University, P.R. China, in 2002, and his Ph.D degree in computer science and engineering from The Chinese University of Hong Kong, in 2006. His research interests are machine learning and data mining and their applications to multimedia information retrieval (image and video retrieval), social media and web mining, and computational finance, etc, and he has published over 150 refereed papers in top conferences and journals in these related areas. He has served as Associate Editor-in-Chief for Neurocomputing Journal, general co-chair for ACM SIGMM Workshops on Social Media (WSM’09, WSM’10, WSM’11), program co-chair for the fourth Asian Conference on Machine Learning (ACML’12), book editor for “Social Media Modeling and Computing”, guest editor for ACM Transactions on Intelligent Systems and Technology (ACM TIST), technical PC member for many international conferences, and external reviewer for many top journals and worldwide funding agencies, including NSF in US and RGC in Hong Kong.
\end{IEEEbiography}
\begin{IEEEbiography}[{\includegraphics[width=1in,height=1.25in,clip,keepaspectratio]{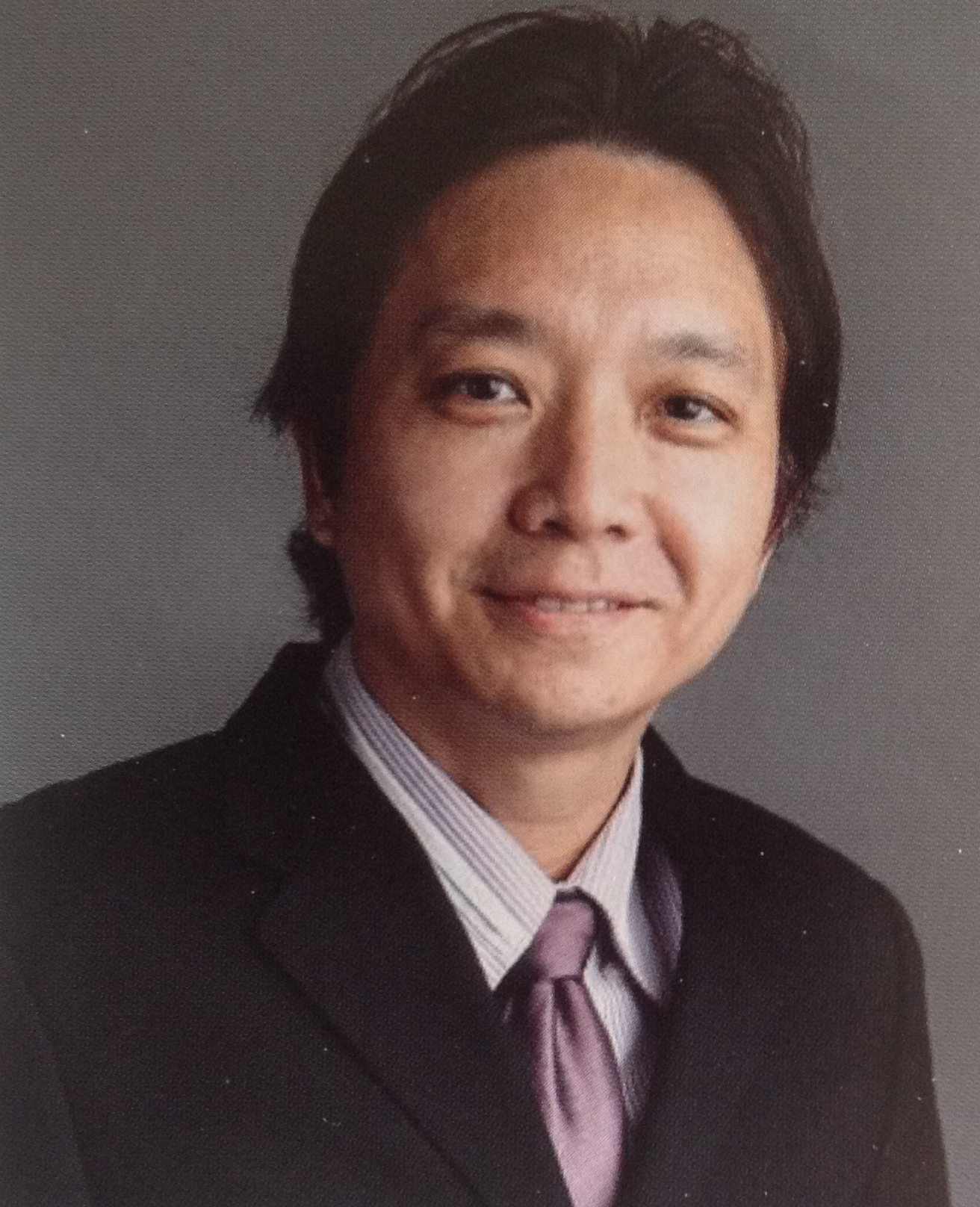}}]{Yew-Soon Ong} is Director of the A*Star SIMTECH-NTU Joint Lab on Complex Systems, Programme Principal Investigator of the Rolls-Royce@NTU Corporate Lab on Large Scale Data Analytics and a Professor of Computer Science in the School of Computer Engineering, Nanyang Technological University, Singapore. He was Director of the Centre for Computational Intelligence or Computational Intelligence Laboratory from 2008-2015. He received his Bachelors and Masters degrees in Electrical and Electronics Engineering from Nanyang Technological University and subsequently his PhD from University of Southampton, UK. His current research interests include computational intelligence spanning memetic computing, evolutionary optimization using approximation/surrogate/meta-models, complex design optimization, intelligent agents in game, and Big Data Analytics.
\end{IEEEbiography}
\end{document}